\newtheorem{theorem}{Theorem}
\begin{document}

\title{Oracle-Guided Masked Contrastive Reinforcement Learning for Visuomotor Policies}

\author{Yuhang Zhang, \IEEEmembership{Student Member, IEEE}, Jiaping Xiao, \IEEEmembership{Student Member, IEEE}, Chao Yan, \IEEEmembership{Member, IEEE}, and \\ Mir Feroskhan, \IEEEmembership{Member, IEEE}
        
\thanks{Y. Zhang, J. Xiao, and M. Feroskhan are with the School of Mechanical and Aerospace Engineering, Nanyang Technological University, Singapore 639798, Singapore (e-mail: yuhang004@e.ntu.edu.sg; jiaping001@e.ntu.edu.sg; mir.feroskhan@ntu.edu.sg). C. Yan is with the College of Automation Engineering, Nanjing University of Aeronautics and Astronautics, Nanjing, 211106, China (e-mail: yanchao@nuaa.edu.cn). \textit{(Corresponding authors: Mir Feroskhan and Chao Yan.)}

Project materials and supplementary information are available at: \url{https://zzzzzyh111.github.io/OMC-RL-Website/}.}
}



\maketitle
 
\begin{abstract}
A prevailing approach for learning visuomotor policies is to employ reinforcement learning to map high-dimensional visual observations directly to action commands. However, the combination of high-dimensional visual inputs and agile maneuver outputs leads to long-standing challenges, including low sample efficiency and significant sim-to-real gaps. To address these issues, we propose Oracle-Guided Masked Contrastive Reinforcement Learning (OMC-RL), a novel framework designed to improve the sample efficiency and asymptotic performance of visuomotor policy learning. OMC-RL explicitly decouples the learning process into two stages: an upstream representation learning stage and a downstream policy learning stage. In the upstream stage, a masked Transformer module is trained with temporal modeling and contrastive learning to extract temporally-aware and task-relevant representations from sequential visual inputs. After training, the learned encoder is frozen and used to extract visual representations from consecutive frames, while the Transformer module is discarded. In the downstream stage, an oracle teacher policy with privileged access to global state information supervises the agent during early training to provide informative guidance and accelerate early policy learning. This guidance is gradually reduced to allow independent exploration as training progresses. Extensive experiments in simulated and real-world environments demonstrate that OMC-RL achieves superior sample efficiency and asymptotic policy performance, while also improving generalization across diverse and perceptually complex scenarios.
\end{abstract}

\begin{IEEEkeywords}
autonomous robot, visuomotor policy, reinforcement learning, masked contrastive learning, learning-by-cheating
\end{IEEEkeywords}

\section{Introduction}

\begin{figure}[t!]
    \centering
    
    \begin{subfigure}{\linewidth}
        \centering
        \includegraphics[width=1.0\linewidth]{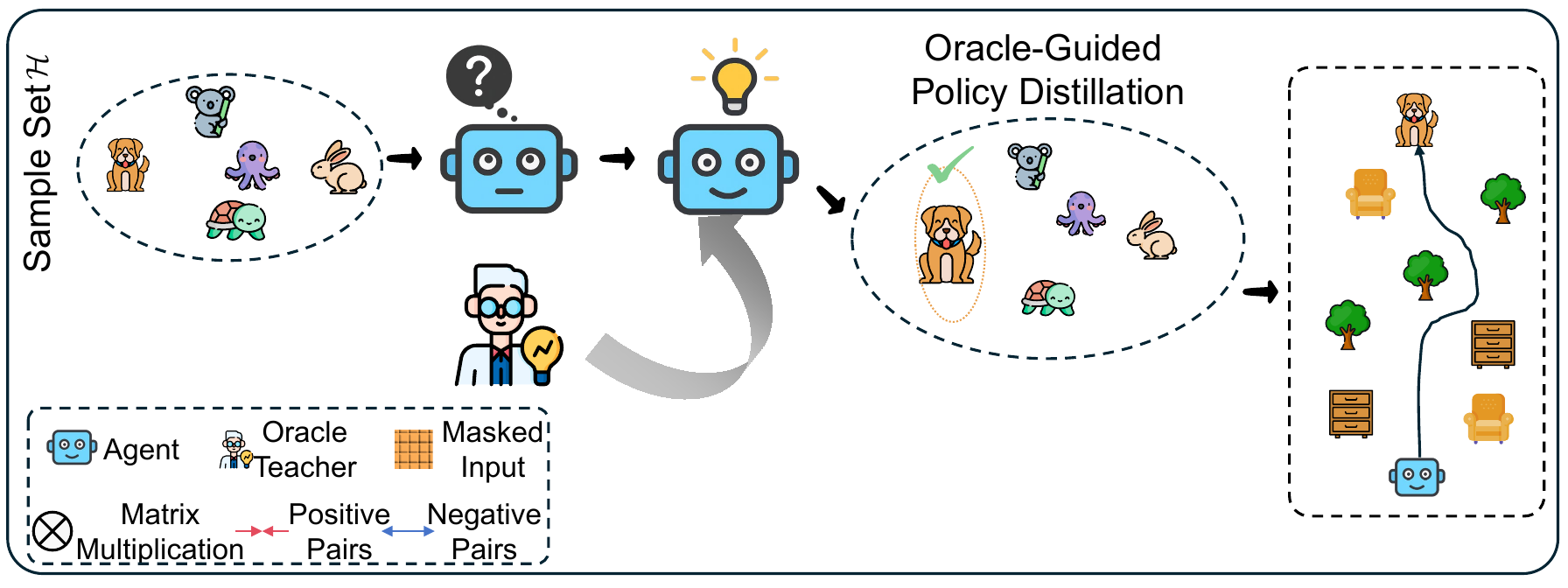}
        \caption{Vanilla Teacher-Student Policy Learning}
        \label{fig:a}
    \end{subfigure}
    
    \begin{subfigure}{\linewidth}
        \centering
        \includegraphics[width=1.0\linewidth]{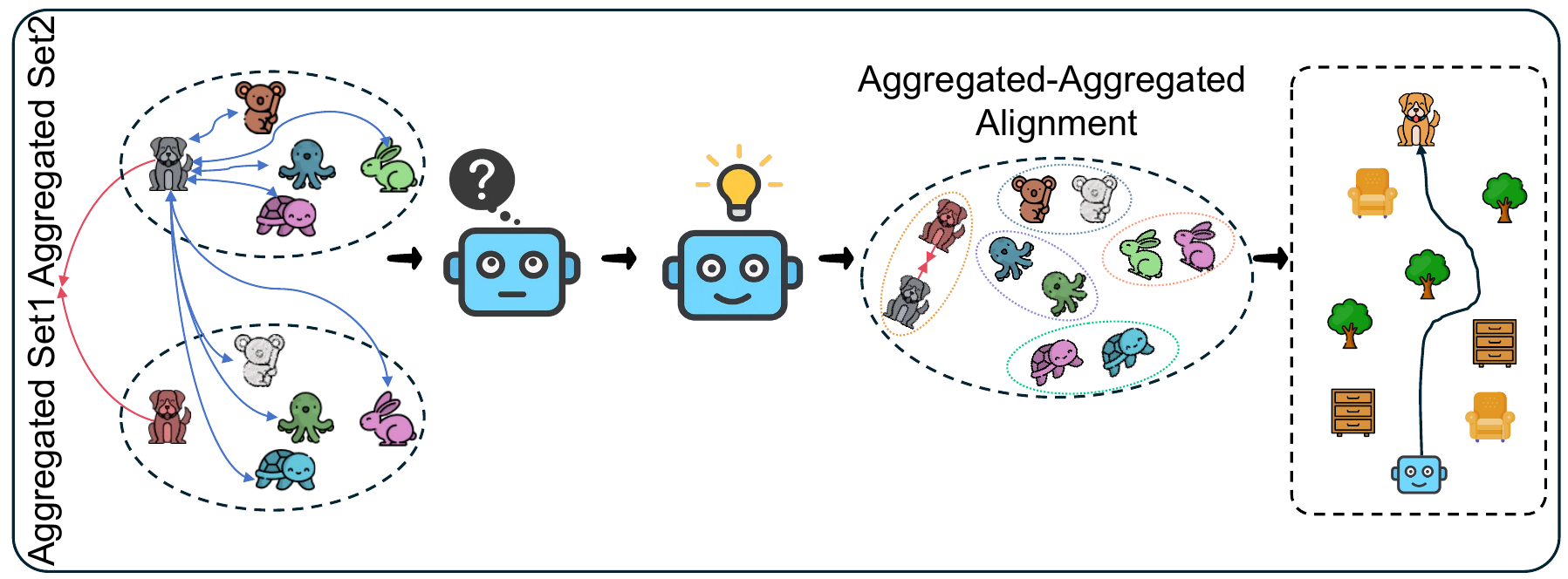}
        \caption{Vanilla Contrastive Learning}
        \label{fig:b}
    \end{subfigure}
    
    \begin{subfigure}{\linewidth}
        \centering
        \includegraphics[width=1.0\linewidth]{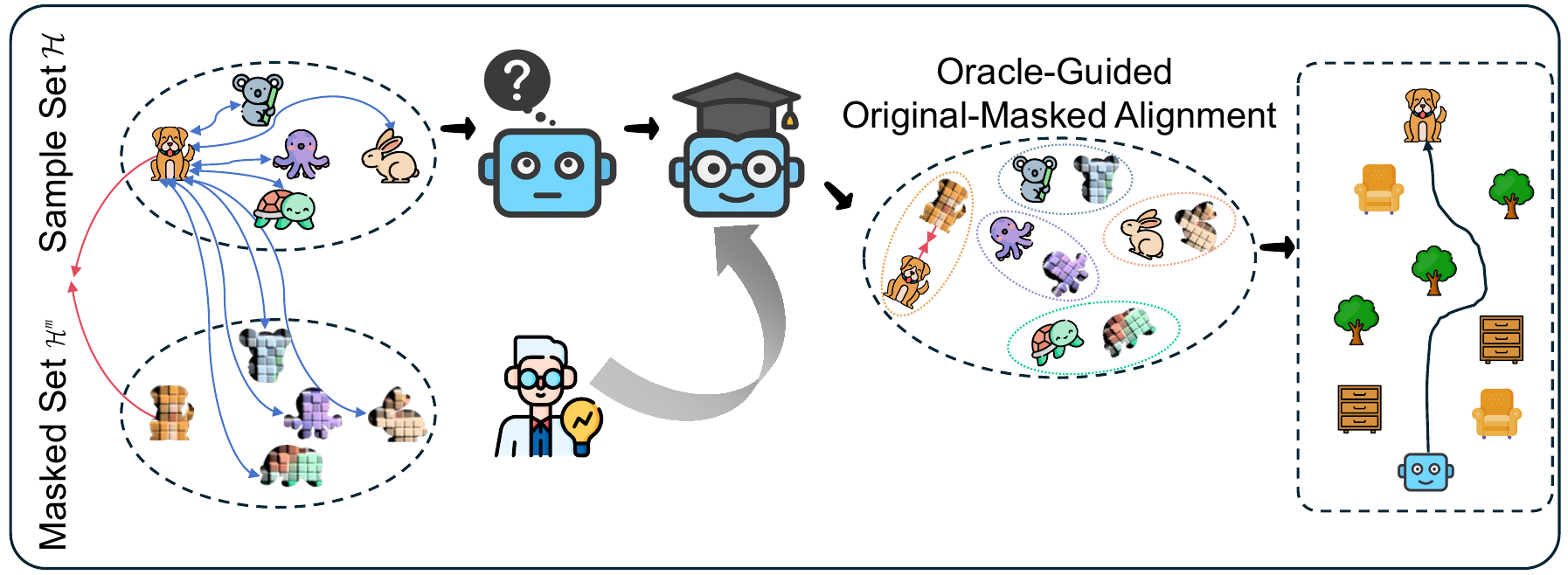}
        \caption{Oracle-Guided Masked Contrastive Learning}
        \label{fig:c}
    \end{subfigure}
    
    \caption{
Illustration of different learning paradigms for goal-conditioned representation and policy learning.  
(a) In vanilla teacher-student policy learning, the agent learns from an oracle teacher via policy distillation to better identify and reach the target through supervised imitation.  
(b) In vanilla contrastive learning, given an observation set $\mathcal{H}$, positive pairs (red arrows) formed between aggregated features are aligned, while all other negatives (blue arrows) are pushed apart.  
(c) Oracle-guided masked contrastive learning (ours) combines masked contrastive representation learning with oracle-supervised policy learning to jointly improve both feature encoding and downstream decision-making.}

    \label{fig:mask}
\end{figure}

First-person-view navigation for autonomous robots represents a challenging task that emulates human behavioral paradigms \cite{wu2023human}, with wide-ranging applications across critical domains, including autonomous driving \cite{he2023fear}, field exploration \cite{ye2024ood}, and urban surveillance \cite{xiao2025vision}. Skilled human operators demonstrate the ability to perform agile and precise maneuvers using only delayed and occasionally occluded visual feedback from onboard monocular cameras, without compromising safety. This remarkable capability is attributed to the human brain’s ability to distill abstract perceptual signals into compact, low-dimensional representations that serve as internal interpretations for guiding action. Given the complexity and unstructured nature of real-world environments, autonomous robotic systems must be equipped with a feature extractor capable of filtering out noise and redundancy from high-dimensional visual feeds, and extracting effective representations for decision-making.



Traditional methods typically follow a mapping–planning paradigm \cite{jin2024gs}, including Simultaneous Localization and Mapping (SLAM) \cite{xu2025airslam} and Structure from Motion (SfM) \cite{wang2024tc}. These pipelines heavily rely on sophisticated hand-crafted features for keypoint detection and matching. Although effective, their reliance on feature engineering limits flexibility and incurs high computational cost. Moreover, their performance degrades significantly in textureless or visually ambiguous environments. Deep learning (DL) \cite{he2016deep} offers an alternative paradigm by leveraging deep neural networks (DNNs) to extract task-relevant features directly from raw sensory observations and map them to action commands. In contrast to heuristic feature engineering, DNN-based encoders extract representations directly from data, allowing the model to adapt to diverse input distributions without manual intervention. However, training visuomotor policies for autonomous robot navigation with DL often demands expert-annotated action datasets as supervision signals, which are expensive to collect, particularly in real-world settings \cite{loquercio2018dronet}.

Recently, reinforcement learning (RL) has achieved expert-level performance, even exceeding human professionals in domains such as Go \cite{silver2018general}, video games \cite{vinyals2019grandmaster}, and Mahjong \cite{li2020suphx}. By integrating RL algorithms with high-capacity DNNs, deep reinforcement learning (DRL) is increasingly adopted to address complex robotic perception and control tasks \cite{kaufmann2023champion, xiao2024learning, yan2025selective}. Specifically, in the domain of autonomous robot navigation where visuomotor policies take a sequence of image frames as input, existing DRL algorithms suffer from two major limitations: (i) \textbf{the curse of dimensionality}. The input data is extremely high-dimensional, making it difficult for the policy network to directly learn effective mappings from observations to actions, which often leads to unstable learning dynamics and poor generalization; (ii) \textbf{sample inefficiency}. Due to the trial-and-error nature of RL and the complexity of visual navigation tasks, training typically requires a large number of rollout trajectories, reflecting low sample efficiency and incurring substantial computational cost. These limitations jointly affect both the efficiency and effectiveness of DRL algorithms. 

To address the aforementioned challenges, recent studies have explored representation learning, which trains a feature encoder to compress high-dimensional visual inputs into compact representations before feeding them into the downstream policy network. However, it remains non-trivial to design representation learning methods that are both effective and sample-efficient in DRL settings. Existing approaches generally fall into two categories: one category introduces auxiliary self-supervised losses to guide encoder learning \cite{yarats2021improving, yarats2021reinforcement, yoo2024mono}, and the other learns a world model that captures transition dynamics and enables planning in the latent space \cite{zhang2019solar, lee2020stochastic, zhao2023learning}.

As a prominent self-supervised strategy, contrastive learning has recently emerged as a powerful tool in representation learning \cite{chen2020simple, jiang2021improving, zhang2022semi}. Unlike class-level supervised learning approaches, it operates at the instance level and is primarily designed to learn generalizable representations through pretext tasks. Specifically, during pre-training, each input image is subjected to data augmentation techniques such as random cropping or color jittering, resulting in two distinct views of the same instance. These views are then passed through a shared feature encoder to produce latent representations. The fundamental objective of contrastive learning is to draw representations of the same instance closer in the feature space while enforcing dissimilarity with the rest in the batch, typically enforced via a contrastive loss \cite{chen2020simple}. Encouraged by its notable performance in the computer vision domain, contrastive learning has been extended to RL settings, commonly as an auxiliary loss \cite{yarats2021improving, laskin2020curl}. A prominent example is CURL \cite{laskin2020curl}, which stands for contrastive unsupervised representations for RL. It demonstrates that pixel-based RL can rival the performance of state-based RL counterparts. 

Further explorations have applied contrastive learning to autonomous robot navigation tasks \cite{zhang2025learning, xing2024contrastive, fu2023learning}, where it has shown promise in improving robustness to visual interference and enhancing sample efficiency. Nonetheless, most existing approaches neglect the strong temporal correlations across consecutive frames. In visuomotor policy learning, understanding such temporal coherence is crucial. Intuitively, a representation encoder should not only process the current observation accurately, but also encode contextual relationships across adjacent frames. In addition, jointly optimizing the encoder and policy network often leads to domain-specific overfitting, thereby hindering generalization to unseen environments.


Encouraged by prior advances, this paper proposes a novel framework named oracle-guided masked contrastive RL (OMC-RL) to enhance the sample efficiency and asymptotic performance of visuomotor policy learning. As illustrated in Fig. \ref{fig:mask}, OMC-RL differs from vanilla teacher-student policy learning and contrastive learning by integrating masked temporal contrastive learning with oracle-guided supervision, leading to joint improvements in both feature encoding and downstream decision-making. Specifically, we decouple the feature encoder from the RL policy to allow stable representation learning and independent control over encoder optimization. For upstream feature extraction, unlike previous approaches that utilize DNNs as pixel encoders \cite{laskin2020curl}, we incorporate a Transformer encoder \cite{vaswani2017attention} as an auxiliary module to model the temporal correlation across sequential observations. Specifically, the latent representations of certain frames are randomly masked and subsequently reconstructed by the Transformer module, encouraging the model to capture temporally consistent representations. A contrastive loss is subsequently designed to jointly train both the DNN encoder and Transformer, encouraging the reconstructed features to be similar to the unmasked ones while distinct from others. For downstream policy learning, we introduce an oracle teacher with privileged access to complete environment states, which is pretrained to generate expert actions. A Kullback–Leibler (KL) divergence loss is designed between the oracle and agent policies to guide policy learning. We conduct extensive experiments in simulated and real-world environments to validate the effectiveness of OMC-RL. Simulation experiments show that OMC-RL consistently outperforms all learning- and planning-based baselines across diverse scenarios. Moreover, real-world in-flight experiments demonstrate that OMC-RL outperforms the state-of-the-art baseline, further validating the advantages of the proposed framework in practical deployment.


The contributions of this paper are summarized below.
\begin{enumerate}
    \item We propose a masked contrastive learning method for robot navigation to address the curse of dimensionality and sample inefficiency arising from high-dimensional visual inputs. The encoder is decoupled from the policy network to enable stable and task-agnostic representation learning. By reconstructing masked latent features from sequential inputs, the method improves temporal awareness and enhances robustness to domain shift.
    \item We introduce an oracle-guided visuomotor policy learning framework, where an expert policy with privileged access to global environment states serves as a training-time oracle. By developing a learning-by-cheating strategy based on KL divergence, the agent is empowered to leverage dense supervision from the oracle, thereby improving sample efficiency and asymptotic performance.
    \item We demonstrate that OMC-RL achieves strong simulation performance compared with various baselines. More importantly, we show that it surpasses the state-of-the-art baseline in real-world in-flight experiments and maintains robust performance under diverse visual disturbances.
\end{enumerate}

The remainder of this paper is organized as follows. Section \uppercase\expandafter{\romannumeral2} reviews prior work on visuomotor policy learning, representation learning for RL, and sample-efficient RL. Section \uppercase\expandafter{\romannumeral3} introduces the preliminaries and formalizes the problem. Section \uppercase\expandafter{\romannumeral4} details the methodology of the proposed framework. Section \uppercase\expandafter{\romannumeral5} describes the experimental setup for evaluating the proposed method. Section \uppercase\expandafter{\romannumeral6} presents the results and analysis in simulation and physical environments. Finally, Section \uppercase\expandafter{\romannumeral7} concludes the paper and outlines future research directions.

\section{Related Work}
This section reviews related work in three key areas: visuomotor policy learning from high-dimensional observations, representation learning for RL, and methods for improving sample efficiency in RL.

\subsection{Visuomotor Policy Learning}
At the core of visuomotor policy learning lies the challenge of approximating the mapping from high-dimensional visual observations to control actions. This problem has been widely addressed using DL \cite{loquercio2018dronet, gandhi2017learning, kouris2018learning}, where large-scale real-world datasets, such as crash records \cite{gandhi2017learning} or urban traffic footage \cite{loquercio2018dronet}, are used to train a DNN to perform this mapping from pixel-level inputs to action commands. However, these DL-based approaches suffer from limited generalization capacity and brittleness to distributional shifts, making them highly sensitive to unseen scenarios.

To overcome these drawbacks, DRL \cite{kaufmann2023champion, zhang2025learning1, zhang2024npe, xiao2023collaborative, yan2023collision} has emerged as an alternative, offering a trial-and-error learning paradigm that enables agents to generalize through interaction. Thanks to its exploratory nature and ability to learn from feedback rewards, DRL exhibits greater robustness when deployed in previously unseen environments. For instance, Kaufmann et al. \cite{kaufmann2023champion} proposed a DRL-based drone racing framework that learns agile visuomotor control policies directly from raw pixel input, achieving performance surpassing champions. Zhang et al. \cite{zhang2025learning1} further enhanced DRL for visuomotor learning by introducing differentiable physics, enabling more efficient and accurate policy optimization through low-variance gradients. 

Distinct from previous efforts that directly optimize over raw pixels, our approach focuses on extracting compact and task-relevant representations from high-dimensional sequential visual inputs. This design aims to improve training efficiency and enable better transferability, especially under challenging real-world conditions.

\subsection{Representation Learning for RL}
Learning effective representations is crucial for visual RL tasks. Existing approaches can be broadly categorized into two types: auxiliary self-supervised learning \cite{yarats2021improving, yarats2021reinforcement, yoo2024mono} and latent world modeling for planning \cite{zhang2019solar, lee2020stochastic, zhao2023learning}. Our work complements the first category. 

Among auxiliary self-supervised methods, one strategy is pixel-level reconstruction \cite{yoo2024mono, bonatti2020learning, kargar2022increasing}, where the encoder is trained to reconstruct raw input observations, thereby encouraging the extraction of informative features that preserve task-relevant information. Another strategy is instance-level contrastive learning \cite{xing2024contrastive, zhang2025learning, de2022depth, fu2023learning, jiang2024bevnav}, which aims to produce similar representations for positive pairs and dissimilar ones for negatives, thereby enhancing discriminative capacity. For instance, Xing et al. \cite{xing2024contrastive} introduced an adaptive multi-pair contrastive learning method into DRL for visuomotor policy learning. Positive pairs were constructed from spatially adjacent frames along the flight track, while distant frames were treated as negatives, enabling local continuity in the learned representation. Zhang et al. \cite{zhang2025learning} further proposed a cross-modal contrastive learning approach to improve domain transfer of visuomotor policies. The contrastive objective was designed to extract depth-aware features from monocular images by aligning RGB–depth pairs, resulting in consistent navigation performance under visual interference and environmental changes. 

Different from prior works that utilize multi-layer perceptrons or DNNs to interpret pixels, our work extends instance-level contrastive learning by incorporating Transformer \cite{vaswani2017attention} as a powerful sequential modeling backbone to process consecutive frames. It allows the encoder to model temporal dependencies explicitly, which enhances context understanding and supports temporally consistent decision making.

\subsection{Sample-Efficient RL}
Improving sample efficiency remains a long-standing challenge in visual RL, particularly in complex real-world robotics tasks such as visuomotor policy learning. Low sample efficiency often leads to prohibitive training time, excessive trial-and-error interactions, and poor transferability \cite{zhang2021sample}. 

Existing methods can be broadly categorized into two directions: input-level optimization \cite{laskin2020curl, yarats2021image, laskin2020reinforcement} and policy-level guidance \cite{hester2018deep, wu2023human, zhang2024npe, huang2024safety}. The former direction focuses on reducing the burden of policy optimization by learning low-dimensional, compact visual representations from high-dimensional observations (e.g., CURL \cite{laskin2020curl}). These approaches align with representation learning techniques discussed in the previous section. Moreover, recent efforts have incorporated data augmentation \cite{yarats2021image, laskin2020reinforcement} into the RL pipeline, improving generalization from limited data and enhancing resilience to visual variations. The latter direction introduces prior demonstrations to assist policy learning. Notable progress has been made in methods like deep Q-learning from demonstrations (DQfD) \cite{hester2018deep}, where the replay buffer is enriched with prior demonstrations to accelerate early-stage learning. Beyond fixed demonstrations, recent studies \cite{wu2023human, zhang2024npe, huang2024safety} explore the use of adaptive guidance by dynamically updating prior policies during training. For instance, Wu et al. \cite{wu2023human} proposed a human-guided RL framework for autonomous navigation, where human feedback is used to interactively update a prior policy. This strategy improved sample efficiency by enabling online correction without extensive manual demonstrations. Zhang et al. \cite{zhang2024npe} introduced a non-expert policy-guided DRL framework, in which a suboptimal planner provided reference actions in the early training stage. The guidance was gradually decayed as train progressed to allow for independent exploration, resulting in enhanced data efficiency with reduced cost. In contrast to these methods, our work combines compact representation learning with privileged policy guidance in a unified framework, improving sample efficiency and real-world applicability under constrained perception.

\begin{figure*}[!t]
   \centering
   \includegraphics[width=1.0\textwidth]{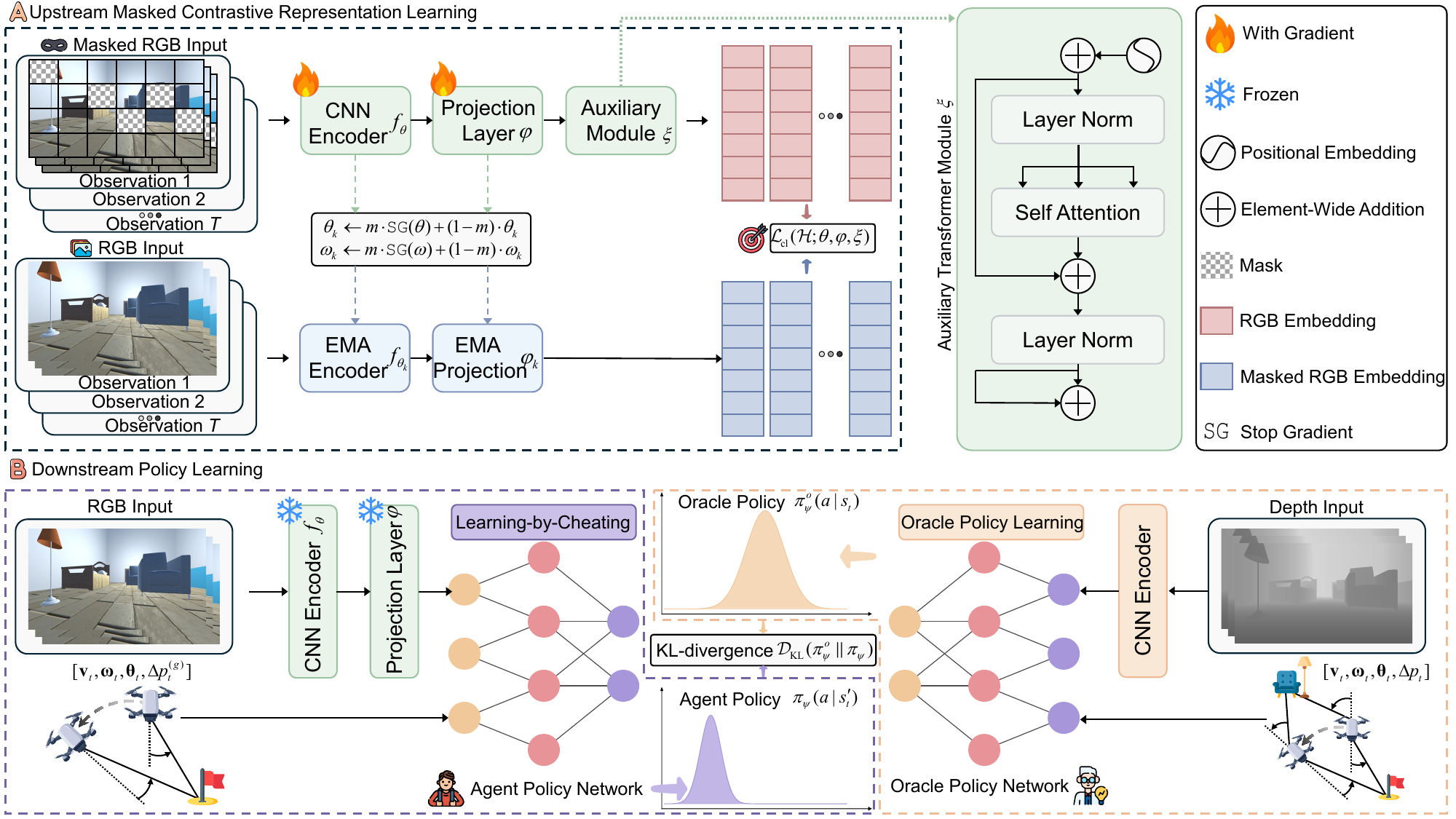} 
   \caption{The framework of OMC-RL. (a) Upstream Masked Contrastive Representation Learning: A masked contrastive learning module is used to learn compact and task-relevant visual representations from sequential RGB inputs. The masked and original inputs are processed through CNN encoders and projection layers, while the masked branch is further processed by an auxiliary transformer module to compute the contrastive loss $\mathcal{L}_\text{cl}$. After training, the CNN encoder is frozen and the transformer module is discarded. (b) Downstream Oracle-Guided Policy Learning: An oracle teacher policy is first trained using privileged depth inputs and full-state information, providing expert action distributions for downstream supervision. This oracle network supervises the student policy through a learning-by-cheating strategy. Specifically, the agent policy is optimized via KL-divergence against the oracle policy distribution to enable efficient visuomotor policy learning.}
   \label{fig:framework}
\end{figure*}

\section{Preliminaries and Problem Formulation}
This section presents the preliminaries of the proposed framework, including the problem formulation and the contrastive learning objective for visuomotor policy learning.

\subsection{Problem Formulation}
Given that visuomotor policy learning for robots is inherently subject to partial observability due to the limited field of view of the onboard monocular camera, we formulate the task as a partially observable Markov decision process (POMDP). The problem is defined by the tuple $(\mathcal{O}, \mathcal{A}, \mathcal{P}, \mathcal{R}, \gamma)$, where $\mathcal{O}$ denotes the observation space of high-dimensional image collections captured by the camera, and $\mathcal{A}$ is the action space available to the agent. At each timestep $t$, the agent observes an image $o_t \in \mathcal{O}$ and executes a corresponding action $a_t \in \mathcal{A}$, which jointly determine the evolution of its interaction with the environment. The transition dynamics model $\mathcal{P}$ specifies the conditional distribution over the next observation $o_{t+1}$ given the current input $o_t$ and action $a_t$, that is, $\mathcal{P} = \Pr(o_{t+1} \mid o_t, a_t)$. The reward function is defined as $\mathcal{R} : \mathcal{O} \times \mathcal{A} \mapsto \mathbb{R}$, which maps an observation-action pair $(o_t, a_t)$ to a scalar reward, \textit{i.e.}, $r_t = \mathcal{R}(o_t, a_t) \in \mathbb{R}$. Finally, $\gamma \in (0,1]$ denotes the discount factor that balances short-term and long-term rewards.

In DRL, DNNs are employed to process high-dimensional observations and learn sequential decision-making policies through trial-and-error interactions. Since the agent cannot directly access the underlying environment state $s_t$, it must instead utilize a history of recent visual observations to ensure informative action execution. Let $h_t = (o_{t-L+1}, o_{t-L+2}, \dots, o_t)$ denote this temporal history. To extract actionable representations from raw inputs, we adopt a two-stage architecture: a DNN encoder $f_\theta$, parameterized by $\theta$, first maps the observation history $h_t$ into a compact latent feature, which is then passed to a policy network $\pi_\psi$, parameterized by $\psi$, to generate the control action, \textit{i.e.}, $a_t = \pi_\psi(f_\theta(h_t))$. The learning objective is to obtain an optimal visuomotor policy $\pi_\psi^*: f_\theta(h_t) \mapsto \mathcal{A}$ that maximizes the expected cumulative discounted reward, $\mathbb{E} \left[ \sum_{t=0}^{\infty} \gamma^t r_t \right]$. During deployment, the agent follows the learned policy $\pi_\psi^* = \arg \max_{\pi_\psi} \mathbb{E}_{o \sim \mathcal{P},\, a \sim \pi_\psi} [ \mathcal{R}(o, a) ]$, where decisions are made iteratively by encoding the current observation history $h_t$ and sampling actions from the policy. 


\subsection{Contrastive Learning} Contrastive learning is a self-supervised representation learning paradigm that formulates instance-level discrimination as a classification task: each sample is treated as its own class and must be distinguished from all other instances in the dataset, without requiring any manual annotations. Given a batch of RGB images $\mathcal{I} = \{\boldsymbol{I}_1, \boldsymbol{I}_2, \dots, \boldsymbol{I}_N\}$ sampled from the replay buffer, each image $\boldsymbol{I}_i \in \mathcal{I}$, is processed by two DNN encoders: a query encoder $f_q$ and a key encoder $f_k$. These encoders map the inputs into a shared latent space, producing embeddings $q_i = f_q(\boldsymbol{I}_i)$ and $k_i = f_k(\boldsymbol{I}_i)$. The learning objective encourages the representation of each positive pair $(q_i, k_i)$ to be similar, while ensuring dissimilarity to the remaining keys in $\mathcal{K} \setminus \{k_i\}$. This is achieved using the InfoNCE loss \cite{laskin2020curl}:
\begin{equation}
\label{infonce}
\mathcal{L}_{\text{InfoNCE}} = - \mathbb{E}_{\mathcal{I}} \left[ \log \frac{\exp(\text{sim}(q_i, k_i)/\tau)}{\sum_{j=1}^{N} \exp(\text{sim}(q_i, k_j)/\tau)} \right],
\end{equation}
where $\text{sim}(\cdot, \cdot)$ denotes a similarity metric (e.g., cosine similarity), and $\tau$ is a temperature hyperparameter. To enhance temporal sensitivity, we adopt a sequence-based encoding scheme where $L$ visual observations are temporally stacked and encoded into a unified latent representation before applying the contrastive loss. This promotes the learning of temporally consistent features that are beneficial for downstream visuomotor policy learning.

\section{Methodology}
This section details the proposed framework, including masked contrastive learning, oracle policy learning, and the two-stage visuomotor policy training procedure.

\subsection{Overview}
We present a comprehensive framework for learning robust visuomotor policies under partial observability, formulated as a two-stage learning process. The full training pipeline is illustrated in Fig.~\ref{fig:framework}. Specifically, we decouple the visuomotor policy learning into two complementary components: (i) an upstream representation learning module based on masked contrastive learning, which extracts informative latent features from high-dimensional visual observations; and (ii) a downstream policy learning module based on a learning-by-cheating strategy, guided by privileged supervision from an oracle teacher policy. In the following subsections, the upstream representation learning is detailed in Section~\ref{text:masked}, and the downstream policy learning is presented in Section~\ref{text:policy}, including the oracle teacher policy and the learning-by-cheating strategy. The complete algorithm and the theoretical justification are summarized in Section~\ref{text:algorithm}.

\subsection{Upstream Masked Contrastive Representation Learning}
\label{text:masked}
To extract generalizable visual features, we decouple representation learning from policy optimization in RL \cite{stooke2021decoupling}, allowing the encoder to be trained via unsupervised objectives independently of reward signals. However, when decoupling is applied directly, RL algorithms typically sample uncorrelated mini-batches for policy optimization and discard collected trajectories after each update. This process disrupts the inherent temporal coherence among observations, potentially hindering stable learning. However, for visuomotor agents, sequential context is critical for informative decision-making due to the partially observable nature of image inputs. To address this, we store full trajectories and exploit the temporal continuity embedded in these sequences to enhance representation learning. Motivated by the success of Transformer architectures in modeling sequence data across domains such as translation \cite{devlin2019bert, cui2021pre}, we propose a masked contrastive learning framework based on the Transformer architecture to learn temporally consistent representations.

Specifically, at each iteration, a stack of $L$ consecutive images $h_t \in \mathcal{H}$, where $\mathcal{H}$ represents the set of all $h_t$, is first processed by the DNN encoder $f_\theta: \mathcal{H} \mapsto \mathbb{R}^d$ to obtain a compact $d$-dimensional latent representation $z_t = f_\theta(h_t)$. Unlike conventional approaches such as CURL \cite{laskin2020curl}, which directly calculates the contrastive loss on these latent features, we introduce an auxiliary transformation pipeline to further process the representation. Specifically, the representation $z_t$ is first passed through a non-linear projection module $\varphi$, and then encoded by a Transformer module $\xi$ to jointly model temporal dependencies and contextual structure. The final output from $\xi(\varphi(z_t))$ is used as the anchor embedding for contrastive learning. Subsequently, we will proceed to detail the auxiliary architecture, including data collection and model structure.

 \textbf{Data Collection.} We collect our training dataset by recording temporally ordered visual observations from drone flights under manual control, random exploration, and pretrained policies. These sequences are saved chronologically and used for offline training. At each training step, we sample $T$ sequential observations denoted by $\mathcal{H} = (h_1, h_2, \dots, h_T)$ from the dataset. To enable masked representation learning as described in \cite{devlin2019bert}, we define a binary masking vector $M = (M_1, M_2, \dots, M_T) \in \{0,1\}^T$, where each $M_i$ is independently sampled from a Bernoulli distribution with masking probability $\varrho_m \in [0,1]$. If $M_i = 1$, the corresponding input $h_i$ is altered via one of the following stochastic operations: with probability $80\%$, $h_i$ is replaced with a zero tensor; with probability $10\%$, it is substituted with a randomly sampled input $h_j$ from the dataset where $j \neq i$; and with probability $10\%$, it remains the same. We denote the processed observation as $h_i^{m}$, and obtain the final masked input via: $\bar{h}_i^{m} = M_i \cdot h_i^{m} + (1 - M_i) \cdot h_i$. The masked sequence is defined as $\mathcal{H}^{m} = (\bar{h}_1^{m}, \bar{h}_2^{m}, \dots, \bar{h}_T^{m})$ and passed through the encoder $f_\theta$ to generate latent embeddings $\hat{Z} = (z_1, z_2, \dots, z_T)$, where $z_i = f_\theta(\bar{h}_i^{m})$, $\bar{h}_i^{m} \in \mathcal{H}^{m}$.

\textbf{Model Structure.} We adopt a lightweight DNN encoder $f_\theta$ to extract compact features from consecutive observations. The encoder consists of two convolutional layers with 32 output channels: the first uses a $3 \times 3$ kernel with stride 2 for spatial downsampling, and the second applies a $3 \times 3$ kernel with stride 1 for feature refinement. Each convolutional layer is followed by a $\mathrm{ReLU}$  activation. The output is then flattened and passed through a fully connected layer, followed by layer normalization and a $\mathrm{tanh}$ activation, producing a bounded $d$-dimensional latent representation.

We introduce a projection module $\varphi$ that applies a nonlinear transformation to the latent features produced by the encoder before contrastive loss computation. This nonlinear mapping aims to project the features into a space where they are more linearly separable for the contrastive objective \cite{chen2020simple}. The module operates on the masked feature sequence $\hat{Z} = (z_1, z_2, \dots, z_T)$. Each $z_i$ is passed through a two-layer multilayer perceptron with $\mathrm{ReLU}$ activation. Specifically, we compute the projected representation as follows:

\begin{equation}
    \varphi(z_i) = W^{(2)} \cdot \text{ReLU}(W^{(1)} z_i + b^{(1)}) + b^{(2)}, 
\end{equation}
where $z_i \in \hat{Z}$, $W^{(1)}, W^{(2)}$ and $b^{(1)}, b^{(2)}$ are learnable weights and biases. The resulting transformed sequence $\hat{Z}^{(0)} = (\varphi(z_1), \dots, \varphi(z_T))$ is subsequently passed to the Transformer encoder for temporal modeling.

To reconstruct the masked latent representations rather than raw pixel observations, we employ a Transformer encoder module $\xi$ that refines the partially corrupted embeddings $\varphi(z_i) \in \hat{Z}^{(0)}$ into temporally contextualized representations. The architecture of $\xi$ follows the encoder-only architecture of the standard Transformer \cite{vaswani2017attention} with single-head attention and is composed of $L$ identical blocks. Each block contains a self-attention layer $\mathrm{Attn}(\cdot)$ and a feedforward transformation $\mathcal{F}(\cdot)$, equipped with residual connections and layer normalization.

Prior to entering the first block, positional encodings are added to each token to incorporate sequential structure, i.e., $\tilde{z}_i = \varphi(z_i) + p_i$, where $p_i$ denotes the sinusoidal positional embedding following the standard Transformer setting. At each block $l \in \{1, \dots, L\}$, the attention mechanism operates on the entire input sequence from the previous block to output $\tilde{Z}^{(l)} = (\tilde{z}_1^{(l)}, \dots, \tilde{z}_T^{(l)})$, where $\tilde{z}_i^{(l)}$ is the result of self-attention at position $i$ in layer $l$. Query, key, and value vectors are computed as $q_i^{(l)} = W_Q^{(l)} \tilde{z}_i^{(l-1)}$, $k_j^{(l)} = W_K^{(l)} \tilde{z}_j^{(l-1)}$, and $v_j^{(l)} = W_V^{(l)} \tilde{z}_j^{(l-1)}$, where $W_Q^{(l)}, W_K^{(l)}, W_V^{(l)}$ are learnable projection matrices for the $l$-th block. Here, $\tilde{z}_i^{(l)}$ can be computed as follows:
\begin{equation}
\begin{aligned}
\tilde{z}_i^{(l)} &=  \mathrm{Attn}(\tilde{z}_i^{(l-1)},\tilde{Z}^{(l-1)}) = \sum_{j=1}^{T} \alpha_{ij}^{(l)} v_j^{(l)},\\
\alpha_{ij}^{(l)} &= \frac{\exp \left( (q_i^{(l)})^\top k_j^{(l)} / \sqrt{d} \right)}{\sum_{j'=1}^{T} \exp \left( (q_i^{(l)})^\top k_{j'}^{(l)} / \sqrt{d} \right)}.
\end{aligned}
\label{eq:attention}
\end{equation}


Then, the output $\tilde{z}_i^{(l)}$ is passed through $\mathcal{F}(\cdot)$ consisting of a two-layer MLP with ReLU activation:
\begin{equation}
z_i^{(l)} = \mathcal{F}(\tilde{z}_i^{(l)}) = W_2^{(l)} \cdot \mathrm{ReLU}(W_1^{(l)} \tilde{z}_i^{(l)} + b_1^{(l)}) + b_2^{(l)}, 
\end{equation}
where all $W_1^{(l)}, W_2^{(l)}$ and $b_1^{(l)}, b_2^{(l)}$ are learnable parameters. By repeating the aforementioned operations for $L$ times, we obtain the final output sequence $Z^{(L)} = (z_1^{(L)}, z_2^{(L)}, \dots, z_T^{(L)})$, which serves as the input to the contrastive learning objective.

\textbf{Training Procedure.} Here we detail the optimization procedure of $f_\theta$, $\varphi$, and $\xi$. Specifically, we employ a query encoder $f_\theta$ and a key encoder $f_{\theta_k}$ with identical architecture but separate parameters, along with projection heads $\varphi$ and $\varphi_k$ of the same configuration. The query embeddings $q_i$ are obtained from the masked representations $Z^{(L)}$ produced by the Transformer module $\xi$, while the key set $\mathcal{K} = (k_1, k_2, \dots, k_T)$ is constructed from the non-masked inputs $\mathcal{H}$, where $k_i = \varphi_k(f_{\theta_k}(h_i))$, $h_i \in \mathcal{H}$. The key network parameters are updated using a momentum update rule \cite{laskin2020curl}: both the encoder parameters $\theta_k$ and the projector parameters $\omega_k$ of $\varphi_k$ are updated as $\theta_k \leftarrow m \cdot \texttt{SG}(\theta) + (1 - m) \cdot \theta_k$ and $\omega_k \leftarrow m \cdot \texttt{SG}(\omega) + (1 - m) \cdot \omega_k$, where $m \in [0,1]$ is the momentum coefficient and $\texttt{SG}(\cdot)$ denotes the stop-gradient operation \cite{he2020momentum}.

The contrastive loss is computed over the masked positions, encouraging each reconstructed representation $q_i$ (i.e., $z_i^{(L)} \in Z^{(L)}$) to be similar to its corresponding positive key $k_i$ while dissimilar to the rest of the batch. This formulation aims to enforce the encoder $f_\theta$ to preserve the most representative and semantically relevant features in the original pixel space. Only by retaining globally consistent patterns can the model accurately reconstruct missing information from partially observed inputs. Note that the loss in OMC-RL follows the same InfoNCE formulation as in Eq. (\ref{infonce}), with the only difference that $q_i$ and $k_i$ are derived from masked reconstruction. Formally, the loss is defined as:
\begin{equation}
\label{cl loss}
\mathcal{L}_{\text{cl}}(\mathcal{H};\theta, \varphi, \xi) = - \mathbb{E}_{q_i} \left[ M_i \cdot \log \frac{\exp\left( \mathrm{sim}(q_i, k_i)/\tau \right)}{\sum_{j=1}^{T} \exp\left( \mathrm{sim}(q_i, k_j)/\tau \right)} \right].
\end{equation}

After training, we discard $\xi$ and retain only $f_\theta$ and $\varphi$, whose parameters are frozen and used as the visual backbone for downstream RL policy optimization, thereby reducing computational overhead.

\subsection{Downstream Oracle-Guided Policy Learning}
\label{text:policy}
\textbf{Oracle Policy Learning.} To provide additional guidance for training the visuomotor policy, we first pretrain an oracle teacher policy, denoted as $\pi_{\psi}^{o}$. The oracle teacher $\pi_{\psi}^{o}$ is a state-based policy with access to the full global state information $s_t$, which is defined as follows:

\begin{equation}
    s_t = \left(\mathcal{I}_t^{\mathrm{d}}, \mathbf{v}_t, \boldsymbol{\omega}_t, \boldsymbol{\theta}_t, \Delta p_t\right), 
\end{equation}
where $\mathcal{I}_t^{\mathrm{d}} = \{\boldsymbol{I}_{t-L+1}^d, \boldsymbol{I}_{t-L+2}^d, \dots, \boldsymbol{I}_{t}^d\}$ denotes a sequence of depth images with $\boldsymbol{I}_{t}^d \in \mathbb{R}^{H \times W}$, $\mathbf{v}_t \in \mathbb{R}^3$ and $\boldsymbol{\omega}_t \in \mathbb{R}^3$ are drone's linear and angular velocities, respectively, $\boldsymbol{\theta}_t \in \mathbb{R}^3$ represents the drone's orientation, and $\Delta p_t \in \mathbb{R}^6$ represents the relative positions between the drone and surrounding objects, including obstacles and the goal. The oracle teacher $\pi_{\psi}^{o}$ maps this global state $s_t$ to a control command $\mathbf{u}_t$ as: $\pi_{\psi}^{o}: s_t \mapsto \mathbf{u}_t \in \mathbb{R}^3$, where $\mathbf{u}_t = (v_x, v_y, \omega_z)$ includes the linear velocities in the $x$ and $y$ directions and the angular velocity around the $z$-axis.

The network architecture of $\pi_{\psi}^{o}$ consists of a visual encoder $f_\theta^o$ followed by a fully-connected layer $\varphi^o$. The encoder $f_{\theta}^o: \mathcal{I}_t^{\mathrm{d}} \mapsto \mathbb{R}^d$ maps the sequential depth images $\mathcal{I}_t^{\mathrm{d}}$ into a $d$-dimensional feature representation $z_t^{\mathrm{d}} = f_\theta^o(\mathcal{I}_t^{\mathrm{d}})$. This embedding is then concatenated with the remaining state information and passed into the fully connected module  $\varphi^o$ to produce the control output: $\mathbf{u}_t^o =  \varphi^o\left( \left[z_t^{\mathrm{d}} ; \mathbf{v}_t ; \boldsymbol{\omega}_t ; \Delta p_t \right] \right)$.  The oracle teacher $\pi_{\psi}^{o}$ can be trained using various model-free RL algorithms \cite{mnih2015human, haarnoja2018soft, schulman2017proximal}. In this work, we adopt Proximal Policy Optimization (PPO) \cite{schulman2017proximal} for its stability and effectiveness in continuous control tasks, which are critical properties for agile and precise visuomotor policy learning. After training, it serves as guidance for subsequent visuomotor policy learning via a learning-by-cheating strategy.


\textbf{Learning-by-Cheating Strategy.} After upstream representation learning, the frozen $f_\theta$ and $\varphi$ are used to process sequential RGB inputs for downstream RL policy learning. Given the current observation history $h_t$ composed of $L$ stacked RGB frames, the $d$-dimensional latent visual representation is computed as: $z_t = \varphi(f_\theta(h_t)),\ z_t \in \mathbb{R}^d$. Unlike the oracle teacher policy $\pi_{\psi}^{o}$, which operates on the privileged state $s_t = (\mathcal{I}_t^{\mathrm{d}}, \mathbf{v}_t, \boldsymbol{\omega}_t, \boldsymbol{\theta}_t, \Delta p_t)$, the RL policy $\pi_{\psi}$ is trained using a partial observation $o_t = (z_t, \mathbf{v}_t, \boldsymbol{\omega}_t, \boldsymbol{\theta}_t, \Delta p_t^{(g)})$, where $\Delta p_t^{(g)} \in \mathbb{R}^3$ represents the relative position between the agent and the goal. The control output is then predicted by passing $o_t$ through the actor network $\phi_a$: $\mathbf{u}_t = \phi_a(o_t)$.

We adopt PPO \cite{schulman2017proximal} as the RL algorithm to optimize $\pi_{\psi}$. Given a trajectory $\mathcal{T}$ sampled from rollouts, the loss is defined as:
\begin{equation}
\label{ppo loss}
\begin{aligned}
\mathcal{L}_{\text{rl}}(\mathcal{T}; \phi_a, \phi_c) = & -\mathbb{E}_t \left[ 
\min \left( r_t \hat{A}_t,\ \text{clip}(r_t, 1 - \epsilon, 1 + \epsilon) \hat{A}_t \right) 
\right] \\
& + \mathbb{E}_t \left[ \left( V(o_t) - \hat{V}_t \right)^2 \right],
\end{aligned}
\end{equation}
where $\phi_a$ and $\phi_c$ are the parameters of the actor and critic networks, respectively, $t$ indexes timesteps sampled from $\mathcal{T}$, $r_t$ denotes the probability ratio between the new and old policies, $\hat{A}_t$ is the estimated advantage, $V(o_t)$ is the predicted state value, and $\hat{V}_t$ is the Monte Carlo return computed from observed rewards.

To leverage the oracle teacher policy $\pi_{\psi}^{o}$ for guiding the training, we adopt the learning-by-cheating paradigm, in which the agent is supervised by an oracle policy that has privileged access to full-state information unavailable during deployment. This allows for stronger and more informative guidance throughout training. Specifically, we introduce an additional KL-divergence term between the action distributions of $\pi_{\psi}$ and $\pi_{\psi}^{o}$. Let $\pi_{\psi}(\cdot|o_t)$ and $\pi_{\psi}^{o}(\cdot|s_t)$ denote the stochastic policies, the KL divergence is defined as:
\begin{equation}
\label{KL}
\mathcal{D}_{\mathrm{KL}}(\pi_{\psi}^{o} \parallel \pi_{\psi}) = \mathbb{E}_{a \sim \pi_{\psi}^{o}(\cdot|s_t)} \left[ 
\log \frac{\pi_{\psi}^{o}(a|s_t)}{\pi_{\psi}(a|o_t)} 
\right],
\end{equation}
which penalizes deviations from the oracle under the privileged state distribution. To ensure stability, this term is scaled by a decaying coefficient $\alpha$ and a fixed scaling constant $\beta$ following \cite{zhang2024npe}, leading to the final RL policy objective:
\begin{equation}
\label{total loss}
\mathcal{L}_{\pi_{\psi}}(\mathcal{T}; \phi_a, \phi_c) = (1 - \alpha)  \mathcal{L}_{\text{rl}}(\mathcal{T}; \phi_a, \phi_c) + \alpha \beta  \mathcal{D}_{\mathrm{KL}}(\pi_{\psi}^{o} \parallel \pi_{\psi}).
\end{equation}

The coefficient $\alpha$ decays over time to allow the learned policy to gradually diverge from the oracle and explore its own strategies. Finally, the overall learning objective of our framework is formulated as:
\begin{equation}
\min_{\theta, \varphi, \xi, \phi_a, \phi_c} \mathcal{L}_{\text{cl}}(\mathcal{H}; \theta, \varphi, \xi) + \mathcal{L}_{\pi_{\psi}}(\mathcal{T}; \phi_a, \phi_c).
\label{eq:overall_obj}
\end{equation}

\subsection{Summary and Theoretical Justification}
\label{text:algorithm}
\textbf{Algorithm Summary.} We summarize the overall training pipeline of OMC-RL in Algorithm \ref{alg:training}, which integrates upstream masked contrastive representation learning and downstream policy learning under oracle guidance. In the first stage, the agent learns robust visual representations through masked contrastive learning. In the second stage, a visuomotor policy is optimized using a learning-by-cheating strategy, with supervision provided by an oracle teacher policy that has access to privileged information.

\begin{algorithm}[t]
\caption{Training Pipeline of OMC-RL}
\label{alg:training}
\KwIn{RGB sequence dataset $\mathcal{H}$, rollout trajectory $\mathcal{T}$, and oracle teacher $\pi_{\psi}^{o}$}
\KwOut{Optimized visuomotor policy $\pi_{\psi}$}

\BlankLine
\textbf{// Upstream Training} \\

\ForEach{iteration}{
    Sample sequence $\mathcal{H} = (h_1, \dots, h_T)$ \\
    Sample mask $M \sim \text{Bernoulli}(\varrho_m)$ \\
    Construct $\mathcal{H}^m = (\bar{h}_1^m, \dots, \bar{h}_T^m)$ \\
    Encode: $z_i = f_\theta(\bar{h}_i^m)$,\quad $q_i = \xi(\varphi(z_i))$ \\
    Compute keys: $k_j = \varphi_k(f_{\theta_k}(h_j)),\ h_j \in \mathcal{H}$ \\
    Compute $\mathcal{L}_{\text{cl}}(\mathcal{H}; \theta, \varphi, \xi)$ via Eq.~\eqref{cl loss} \\
    
    $\theta \leftarrow \theta - \lambda_\theta \nabla_\theta \mathcal{L}_{\text{cl}}(\mathcal{H}; \theta, \varphi, \xi)$ \\
    $\varphi \leftarrow \varphi - \lambda_\varphi \nabla_\varphi \mathcal{L}_{\text{cl}}(\mathcal{H}; \theta, \varphi, \xi)$ \\
    $\xi \leftarrow \xi - \lambda_\xi \nabla_\xi \mathcal{L}_{\text{cl}}(\mathcal{H}; \theta, \varphi, \xi)$ \\
    
    $\theta_k \leftarrow m \cdot \texttt{SG}(\theta) + (1 - m) \cdot \theta_k$ \\
    $\omega_k \leftarrow m \cdot \texttt{SG}(\omega) + (1 - m) \cdot \omega_k$ \\
}

\BlankLine
\textbf{// Downstream Training} \\

Freeze $f_\theta$ and $\varphi$, discard $\xi$ \\

\ForEach{iteration}{
    Sample rollout trajectory $\mathcal{T} = \{s_t, o_t, \mathbf{u}_t, \mathbf{u}_t^o, r_t\}_{t=1}^T$ \\
    Compute $\mathcal{L}_{\text{rl}}(\mathcal{T}; \phi_a, \phi_c)$ via Eq.~\eqref{ppo loss} \\
    Compute $\mathcal{D}_{\mathrm{KL}}(\pi_{\psi}^{o} \parallel \pi_{\psi})$ via Eq.~\eqref{KL} \\
    Compute $\mathcal{L}_{\pi_\psi}(\mathcal{T}; \phi_a, \phi_c)$ via Eq.~\eqref{total loss} \\
    
    $\phi_a \leftarrow \phi_a - \lambda_{\phi_a} \nabla_{\phi_a} \mathcal{L}_{\pi_\psi}(\mathcal{T}; \phi_a, \phi_c)$ \\
    $\phi_c \leftarrow \phi_c - \lambda_{\phi_c} \nabla_{\phi_c} \mathcal{L}_{\pi_\psi}(\mathcal{T}; \phi_a, \phi_c)$ \\
    
}
\end{algorithm}

\textbf{Theoretical Justification.} The objective of masked contrastive learning is to discover a representation space $Z$ that captures the structural granularity and alignment among samples. Assuming the existence of an optimal representation $Z^\star$, we can define the expected population risk under $Z^\star$ as:

\begin{equation}
R_E(\theta, \varphi, \xi) = \mathbb{E}_{h_i}[\mathcal{L}_{\text{cl}}(h_i, z_i^\star; \theta, \varphi, \xi)].
\end{equation}

In practice, the corresponding empirical risk over $N$ samples is given by:

\begin{equation}
\hat{R}(\theta, \varphi, \xi) = \frac{1}{N} \sum_{i=1}^N \mathcal{L}_{\text{cl}}(h_i, z_i^\star; \theta, \varphi, \xi).
\end{equation}

\begin{theorem}\label{theorem1}
The generalization error of masked contrastive learning is bounded by the discrepancy between the learned representation space $Z$ and the optimal representation space $Z^\star$, i.e.,
\begin{equation}
    d_z = \frac{1}{n} \sum_{i=1}^n \lVert z_i - z_i^\star \rVert_2.
\end{equation}
\end{theorem}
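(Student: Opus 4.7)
The plan is to reduce the generalization error to a pointwise discrepancy between the learned representations $z_i$ and the optimal representations $z_i^\star$, and then invoke the Lipschitz continuity of the InfoNCE objective defined in Eq.~(\ref{cl loss}). The intuition is that, once the encoder maps observations into a bounded feature space, the contrastive loss varies smoothly with the representation, so any small perturbation from $z_i^\star$ to $z_i$ can only change the risk proportionally to $\|z_i - z_i^\star\|_2$.

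First I would abbreviate $\ell_i(z) := \mathcal{L}_{\text{cl}}(h_i, z; \theta, \varphi, \xi)$ and split the generalization error into a sampling component and a representation component:
\begin{equation*}
R_E - \hat R = \underbrace{\bigl(\mathbb{E}_{h_i}[\ell_i(z_i^\star)] - \tfrac{1}{n}\textstyle\sum_i \ell_i(z_i^\star)\bigr)}_{\text{sampling gap}} + \underbrace{\tfrac{1}{n}\textstyle\sum_i \bigl(\ell_i(z_i^\star) - \ell_i(z_i)\bigr)}_{\text{representation gap}}.
\end{equation*}
The sampling gap is a standard empirical-process term, whereas the representation gap is exactly the quantity that $d_z$ controls.

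Second, I would establish the Lipschitz continuity of $\ell_i$ in its second argument. Because the encoder terminates in a $\tanh$ after layer normalization, all embeddings live in a uniformly bounded set, on which the cosine similarity is Lipschitz and the log-softmax in Eq.~(\ref{cl loss}) has a bounded Jacobian. Composing these facts yields a constant $L_{\text{cl}} > 0$ with $|\ell_i(z_i) - \ell_i(z_i^\star)| \leq L_{\text{cl}}\,\|z_i - z_i^\star\|_2$, and averaging over $i$ converts the representation gap into $L_{\text{cl}} \cdot d_z$. The sampling gap can then be absorbed via a McDiarmid or Rademacher-complexity argument, contributing an $O(\sqrt{\log(1/\delta)/n})$ term that vanishes as the dataset grows, leaving the claimed $d_z$-bound.

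The hard part will be obtaining a sharp Lipschitz constant $L_{\text{cl}}$, because the log-partition function couples the query $q_i$ with every key in the batch, so a naive estimate grows with the sequence length $T$ and with $1/\tau$. A clean bound requires exploiting the uniform boundedness of the $\tanh$-normalized features to show that perturbing a single embedding shifts both the numerator and the denominator of the softmax in a commensurate way, yielding a $T$-independent constant. A secondary, more technical difficulty is justifying the existence of $Z^\star$ and its measurable selection $z_i^\star$, which I would handle by assuming the existence of a minimizer of the population risk within the hypothesis class so that the decomposition above is well-defined.
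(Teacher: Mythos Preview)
Your decomposition into a sampling gap and a representation gap, followed by a Lipschitz bound on the latter, is exactly the structure of the paper's proof. The paper bounds $|R_E - R|$ (note: your left-hand side should read $R_E - R$, not $R_E - \hat R$, since the second summand replaces $z_i^\star$ by the learned $z_i$) via the same triangle inequality through $\hat R$, then applies Hoeffding together with a union bound over a cover of the hypothesis class to obtain $|a-b|\sqrt{\log(2\mathcal{N}_\mathcal{E}\mathcal{N}_\mathcal{P}\mathcal{N}_\mathcal{T}/\delta)/(2n)}$ for the sampling gap, and simply \emph{assumes} the loss is bounded in $[a,b]$ and $\lambda$-Lipschitz in $z$ to conclude $\lambda\, d_z$ for the representation gap. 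Two minor deviations are worth flagging: (i) you work harder than the paper does, deriving the Lipschitz constant from the $\tanh$-bounded features and the log-softmax Jacobian, whereas the paper treats both boundedness and the Lipschitz constant as standing hypotheses; (ii) your stated $O(\sqrt{\log(1/\delta)/n})$ rate for the sampling gap omits the model-class complexity term---the paper's covering-number factor $\mathcal{N}_\mathcal{E}\mathcal{N}_\mathcal{P}\mathcal{N}_\mathcal{T}$ (or, in your Rademacher route, the class complexity) must appear if the bound is to hold uniformly over $(\theta,\varphi,\xi)$ rather than for a single fixed model.
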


\begin{proof}
Assume the contrastive loss $\mathcal{L}_{\text{cl}}(h_i, z_i^\star; \theta, \varphi, \xi)$ is bounded in $[a, b]$ and is $\lambda$-Lipschitz continuous with respect to $z_i$.  
Let $\mathcal{N}_\mathcal{E}$, $\mathcal{N}_\mathcal{P}$, and $\mathcal{N}_\mathcal{T}$ denote the covering numbers of the encoder, projection head, and transformer module spaces, respectively. Then, with probability at least $1 - \delta$, we have:
\begin{align*}
|R_E(\theta, \varphi, \xi) - R(\theta, \varphi, \xi)| 
&\leq \underbrace {|R_E(\theta, \varphi, \xi) - \hat{R}(\theta, \varphi, \xi)|}_{\text{Hoeffding's inequality}} \\
&\quad + \underbrace {|\hat{R}(\theta, \varphi, \xi) - R(\theta, \varphi, \xi)|}_{\text{Lipschitz continuous}} \\
&\leq |a - b| \sqrt{\frac{\log(2\mathcal{N}_\mathcal{E} \cdot \mathcal{N}_\mathcal{P} \cdot \mathcal{N}_\mathcal{T} / \delta)}{2n}} \\
&\quad + \lambda \cdot \frac{1}{n} \sum_{i=1}^n \lVert z_i - z_i^\star \rVert_2.
\end{align*}
\end{proof}

Based on Theorem \ref{theorem1}, the generalization error of masked contrastive learning can be effectively reduced when the learned representation space $Z$ better approximates the optimal space $Z^\star$. By employing a masking strategy, OMC-RL encourages the model to capture the intrinsic structure of the data, thus providing a theoretical justification for its effectiveness.

\section{Experimental Setup}
This section details the experimental setup for evaluating OMC-RL. We describe the simulation environments, evaluation metrics, baseline methods, and key implementation configurations used throughout our experiments.

\begin{figure}[t!]
    \centering
    \begin{subfigure}[htbp]{0.48\linewidth}
        \centering
        \includegraphics[width=\linewidth]{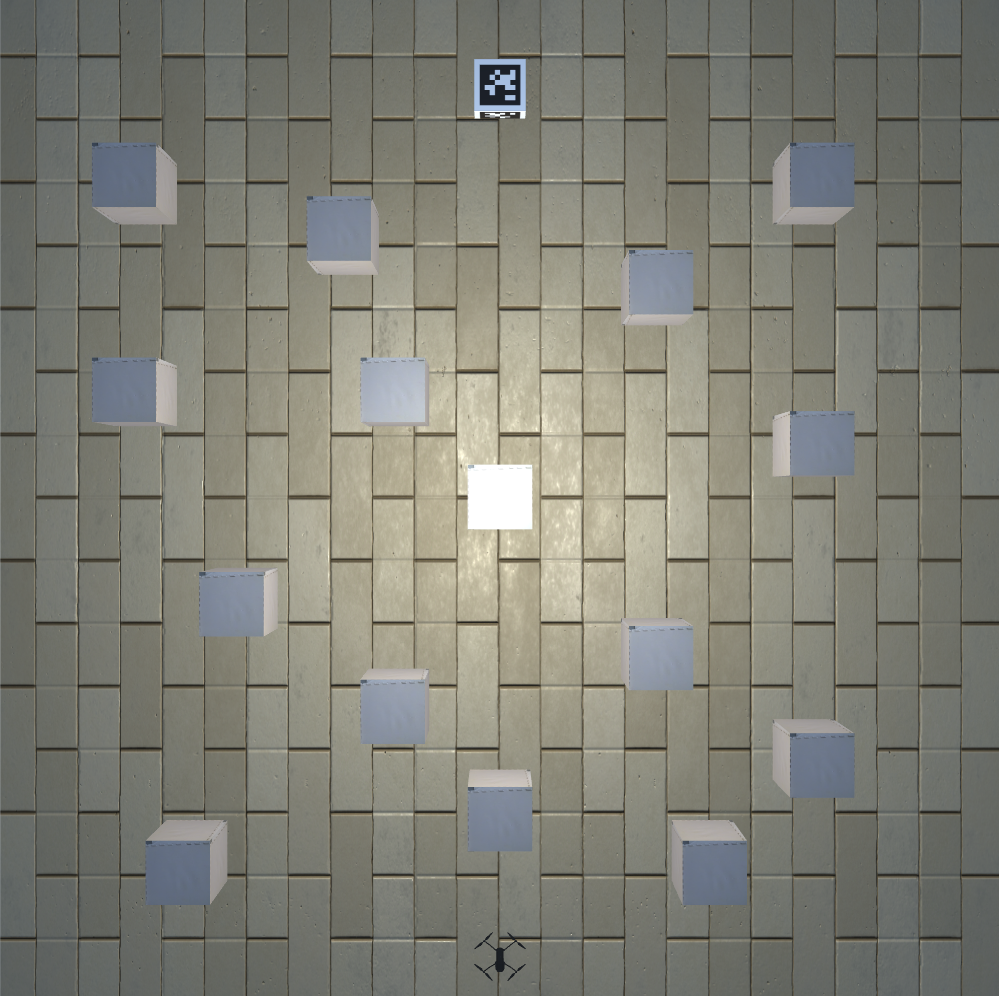}
        \caption{Box environment}
        \label{fig:sub1}
    \end{subfigure}
    \hfill
    \begin{subfigure}[htbp]{0.48\linewidth}
        \centering
        \includegraphics[width=\linewidth]{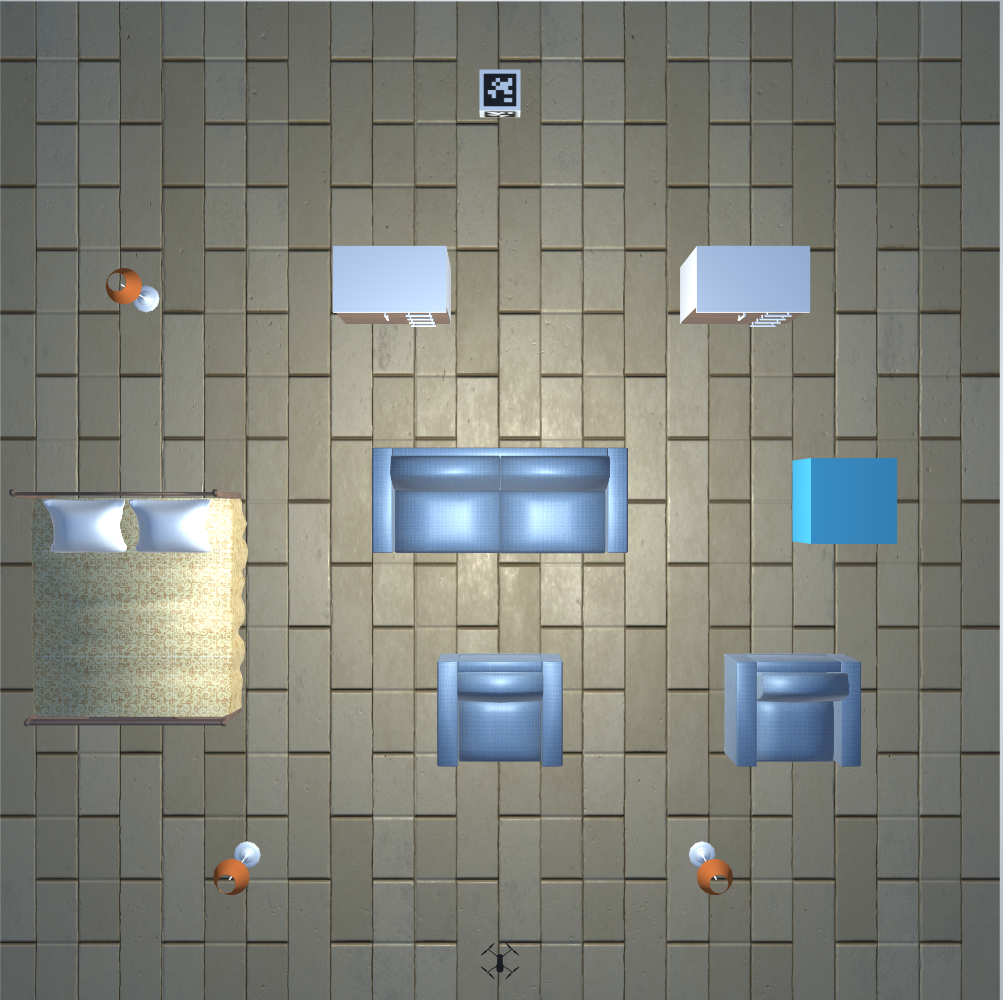}
        \caption{Furniture environment}
        \label{fig:sub2}
    \end{subfigure}
    
    \vskip\baselineskip
    
    \begin{subfigure}[htbp]{0.48\linewidth}
        \centering
        \includegraphics[width=\linewidth]{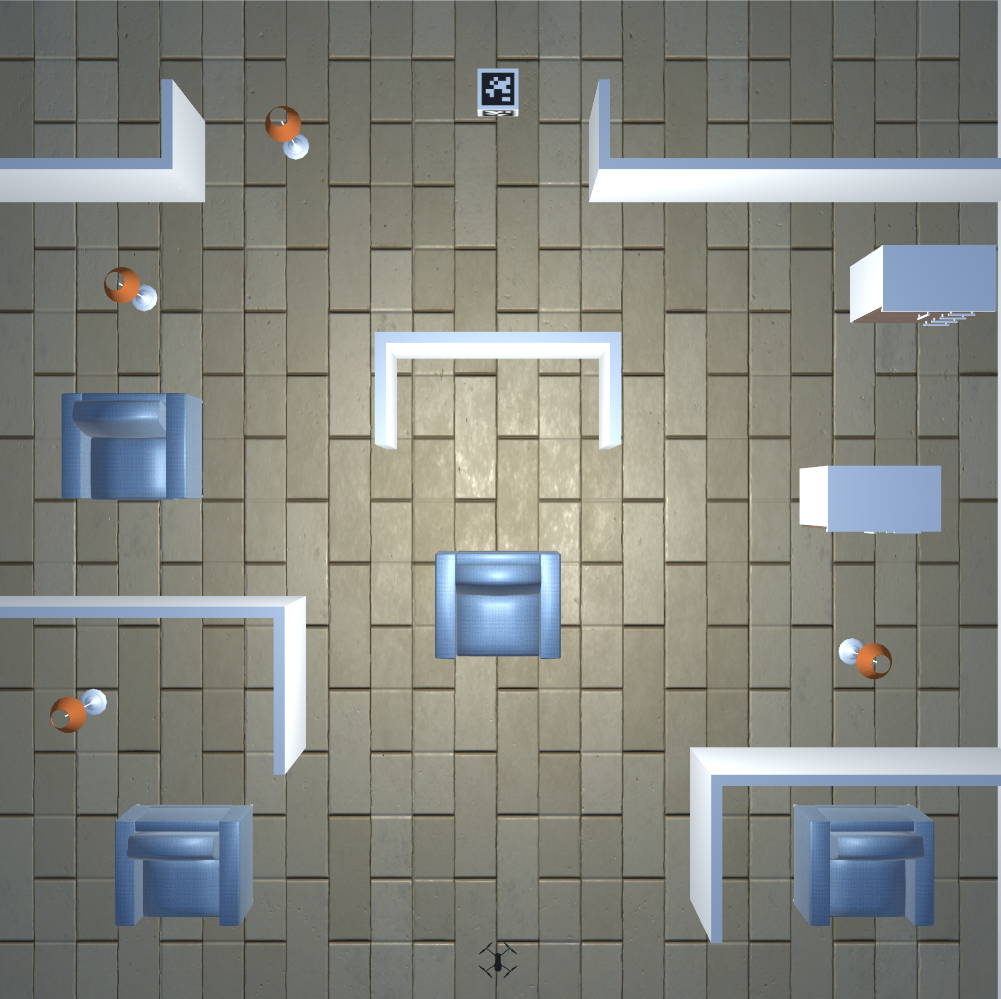}
        \caption{Barrier environment}
        \label{fig:sub3}
    \end{subfigure}
    \hfill
    \begin{subfigure}[htbp]{0.48\linewidth}
        \centering
        \includegraphics[width=\linewidth]{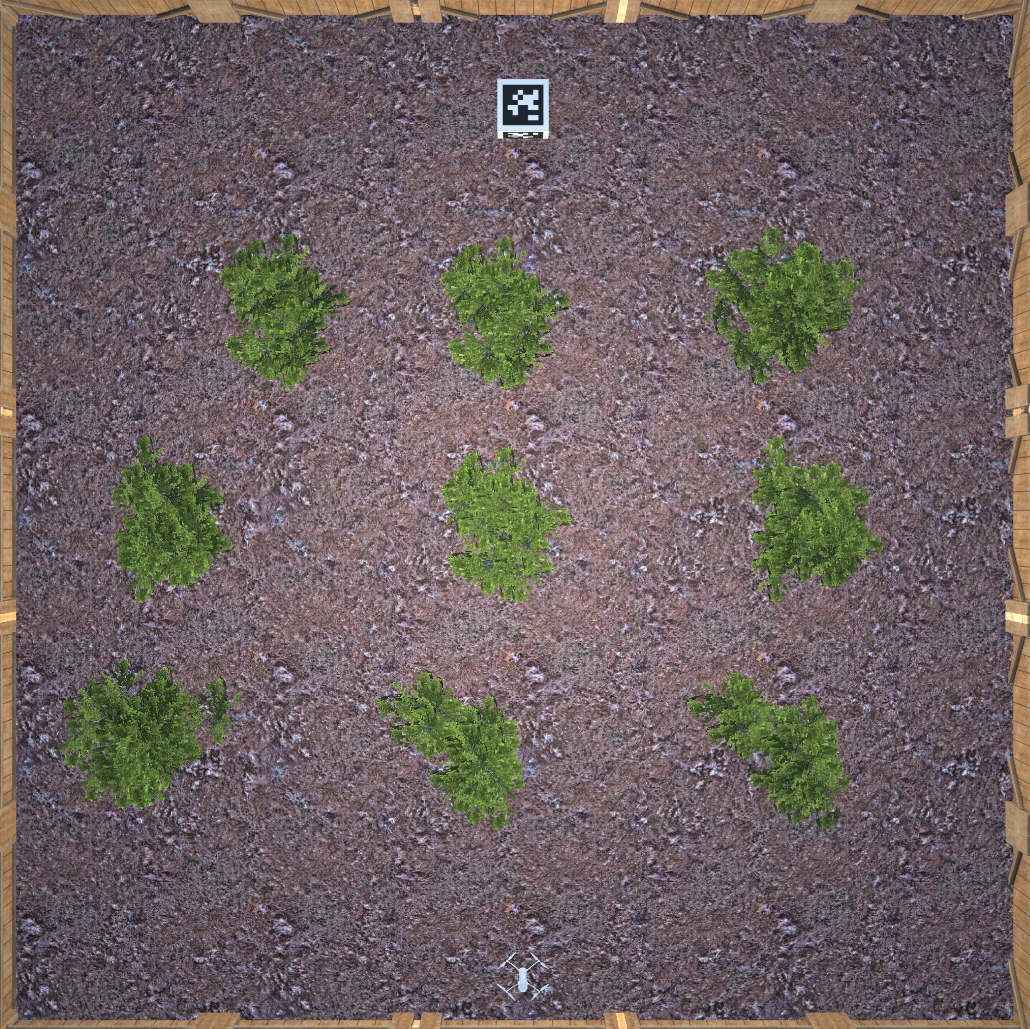}
        \caption{Tree environment}
        \label{fig:sub4}
    \end{subfigure}

    \caption{Simulation environments with increasing complexity used to evaluate OMC-RL.}
    \label{fig:scene}
\end{figure}

\subsection{Simulation Setup}
We construct our simulation environments in Unity to systematically evaluate the performance and generalization capability of our proposed method under diverse conditions. To this end, we design four distinct scenarios with various complexities, as illustrated in Fig. \ref{fig:scene}. The first is a structured indoor environment populated with identical box-shaped obstacles. The second is a furniture-filled indoor scene containing a variety of objects with different shapes, sizes, and materials. The third scenario introduces additional structural complexity by incorporating irregular barriers. The fourth setting is an outdoor forest environment densely populated with trees of varying geometries and textures, presenting significant visual diversity and occlusion. These environments provide a gradient of perceptual and navigational difficulty. For a fair comparison of sample efficiency, asymptotic performance, and generalization, all methods, including OMC-RL and baselines, are trained exclusively in the box-based environment and are evaluated in the remaining scenes without fine-tuning.

\subsection{Metrics} 
We evaluate the performance of our method using several widely adopted metrics \cite{anderson2018vision}. 
\begin{itemize} 
    \item \textit{Navigation Error (NE)}, defined as the average Euclidean distance between the drone's terminal position and the goal (m). 
    \item \textit{Oracle Success Rate (OS)}, which measures the percentage of episodes where any location along the trajectory falls within a fixed success threshold $\varepsilon$ of $0.5\,\mathrm{m}$ from the goal, as specified in our setup (\%). 
    \item \textit{Success Rate (SR)}, defined as the percentage of episodes in which the drone reaches the goal (\%). 
    \item \textit{Success weighted by Path Length (SPL)}, the primary metric reflecting both task completion and navigation efficiency, calculated as $\text{SPL} = \frac{1}{M} \sum_{i=1}^{M} \frac{\mathbb{I}_i \cdot \ell_i}{\max(d_i, \ell_i)}$, where $\mathbb{I}_i \in \{0,1\}$ indicates success, $\ell_i$ is the optimal path length, and $d_i$ is the executed trajectory length. 
    \item \textit{Collision Rate (CR)}, defined as the percentage of episodes in which the drone collides with obstacles (\%). 
    \item \textit{Time to Success (TTS)}, defined as the average number of steps taken in successful episodes. 
\end{itemize}

Note that these metrics are applied exclusively in simulation, where an episode is considered successful only if the drone reaches the goal precisely. For real-world experiments, only the success rate is reported, and an episode is deemed successful if the drone arrives within a $0.5\,\mathrm{m}$ radius of the goal to ensure safety.

\subsection{Baselines}
\label{text::baselines}
To comprehensively validate the performance of OMC-RL, we compare it with several representative baselines across RL and traditional planning-based approaches.

\begin{itemize}

    \item NPE \cite{zhang2024npe}: An improved RL baseline that combines non-optimal demonstrations IL to address the trade-off between sample efficiency and asymptotic performance. We use it to benchmark the effectiveness of our oracle-guided IL formulation.
    
    \item CURL \cite{laskin2020curl}: A classical contrastive RL framework that learns compact visual representations by maximizing similarity between augmented views of the same observation. It relies mainly on local image augmentations and CNN encoders without modeling temporal relationships.

    \item PPO \cite{schulman2017proximal}: A widely used model-free RL baseline. We adopt its end-to-end variant that directly maps monocular RGB inputs to continuous action commands without any auxiliary supervision or representation pretraining.


    \item Hybrid APF \cite{pan2021improved}: A traditional path planning baseline based on the artificial potential field (APF), augmented with A* search to improve global feasibility. This method represents classical non-learning approaches.
\end{itemize}

\subsection{Parameter and Architecture Configuration}

\textbf{Upstream Masked Contrastive Representation Learning.} The encoder is trained using a batch size of $32$ over sequences of length $T=16$, where each observation $h_t$ is formed by stacking $L=3$ consecutive RGB frames. Each frame is randomly cropped from $224 \times 224$ to $192 \times 192$ before being fed into the encoder. The output latent representation dimension is $d = 384$. We use a masking probability $\varrho_m = 0.5$, and set the temperature coefficient in the contrastive loss of Eq.~\eqref{cl loss} to $\tau = 0.07$. The encoder $f_\theta$ and projection head $\varphi$ are optimized using Adam with learning rates $\lambda_\theta = \lambda_\varphi = 1 \times 10^{-3}$, while the Transformer module $\xi$ is trained with a separate learning rate $\lambda_\xi = 2 \times 10^{-3}$. A warm-up and inverse square root decay schedule is applied to the Transformer for training stabilization, with a warm-up horizon of 6000 steps. The architecture details of the encoder, projection head, and Transformer are provided in Section~\ref{text:masked}. The momentum coefficient used to update the key encoder and key projection head is set to $m = 0.05$.

\textbf{Downstream Oracle-Guided Policy Learning.} The RL policy is optimized with a batch size of $1024$, buffer size of $10240$, and a linearly decaying learning rate initialized at $3 \times 10^{-4}$. The policy network consists of two fully connected layers with $256$ hidden units each, and layer normalization is applied to all input features. The clipping parameter is set to $\epsilon = 0.2$, and other hyperparameters follow the standard PPO \cite{schulman2017proximal} style, including a GAE parameter of $0.95$, time horizon of $128$, and three optimization epochs per iteration, etc. The oracle-guided KL regularization coefficient $\alpha$ is linearly annealed from $0.95$ to $0$ every $10000$ steps, with the scaling factor $\beta$ fixed at $1.0$.

To encourage goal reaching, safety, and efficiency, the reward function consists of three components. A terminal reward of $+10$ is given upon reaching the goal, while collisions incur a penalty of $-1$. To discourage overly long trajectories, each action step receives a penalty of $-\frac{1}{H_{\max}}$, where $H_{\max} = 5000$ is the maximum episode length. Additionally, a reward of $0.1 \times (d_{\text{init}} - d_t)$ is applied at each timestep, where $d_{\text{init}}$ and $d_t$ represent the initial and current distances to the goal, respectively. Note that all baselines adopt the same DNN architecture as $f_\theta$ to ensure fair visual representation capacity. For baselines involving overlapping training settings, such as the feature encoder used in CURL \cite{laskin2020curl}, and the RL optimization parameters in NPE \cite{zhang2024npe}, we apply identical hyperparameter settings to ensure fair comparison.

\section{Results and Analysis}
This section presents the experimental results and key analyses that validate the effectiveness of OMC-RL. We first evaluate its performance across a series of simulated environments and compare it against various baselines. We then assess its real-world generalizability through physical deployment on a quadrotor platform. Finally, we conduct ablation studies to examine the contribution of each core component.

\subsection{Comparison Results in Simulation}

All learning-based baselines are trained exclusively in the box-based environment. Their reward curves during training are illustrated in Fig. \ref{fig:reward}. After training, each baseline is evaluated in the remaining three environments without any fine-tuning. In each environment, we conduct 200 testing episodes to assess generalization. For the non-learning Hybrid APF, the policy is directly applied to all evaluation scenarios. Note that we refer to OMC-RL as Ours in all results.

\textbf{(1) Training Results.} As shown in Fig. \ref{fig:reward}, our method achieves the fastest convergence and strongest asymptotic performance, consistently outperforming all baselines. This advantage can be attributed to two key factors: (i) the visual encoder is pretrained and frozen, which reduces training instability and computational cost; and (ii) the introduction of oracle guidance during early training is conducive to acquiring informative priors, substantially accelerating learning. Among the remaining baselines, NPE benefits from suboptimal demonstrations that enhance initial exploration, but its reliance on imperfect guidance ultimately limits overall performance. CURL converges more slowly and achieves weaker asymptotic performance compared to NPE. It also underperforms our method, which can be attributed to its inability to model temporal relationships across observations. PPO, as the vanilla baseline, performs significantly inferior than all baselines.

\begin{figure}[t!]
    \centering
    \includegraphics[width=1.0\linewidth]{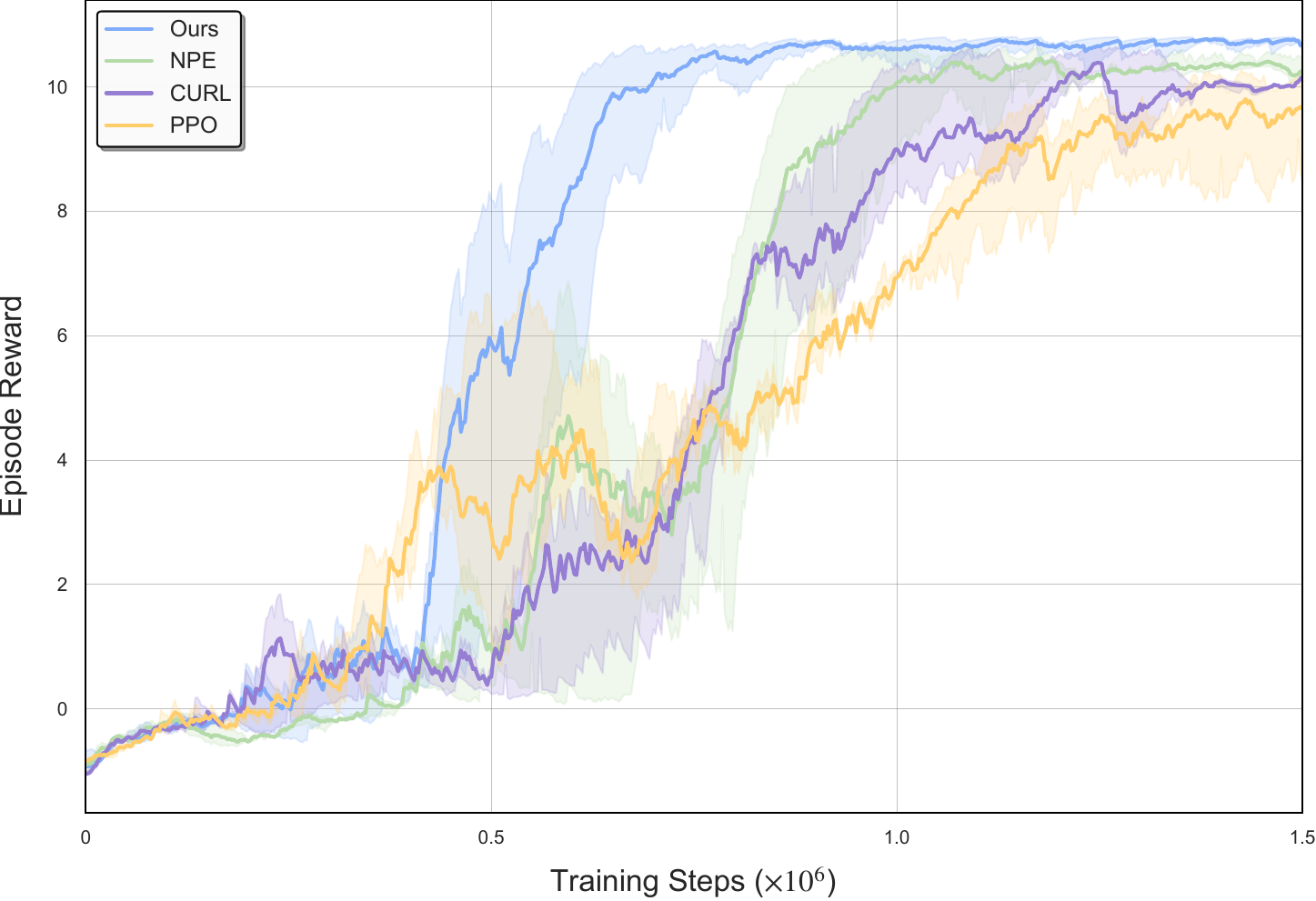}
    \caption{Training curves of episodic reward for all learning-based baselines. All results are averaged over three random seeds, with shaded regions indicating confidence intervals. Oracle serves as an upper bound with the fastest convergence and the highest asymptotic performance, while our method achieves comparable results and consistently outperforms all other baselines. NPE benefits from imitation of suboptimal demonstrations, outperforming CURL in both sample efficiency and asymptotic performance. PPO, as a vanilla baseline, exhibits the weakest performance.
}
    \label{fig:reward}
\end{figure}

\textbf{(2) Evaluation Results.} Quantitative results across all metrics are summarized in Table \ref{tab:baseline}. The results show that our method maintains consistently strong generalization across all evaluation environments despite relying solely on sequences of monocular RGB frames. This generalization capability stems from two key design choices: (i) the pretrained and frozen visual encoder facilitates feature transfer by learning generalizable visual representations; and (ii) our method explicitly models temporal relationships across sequential observations, which are critical for visuomotor policy learning under partial observability. In contrast, although CURL also follows a pretrain-and-freeze paradigm, its focus on frame-wise representation learning without temporal modeling limits its generalization, particularly in the outdoor environment with high visual variability. NPE exhibits noticeable performance drops in evaluation scenarios. This is expected, as IL approaches inherently rely on prior demonstrations, making them prone to overfitting and less adaptable to unseen settings. While NPE partially alleviates this by transitioning from IL to exploration, its end-to-end training setup with RGB observations, where the encoder and policy are jointly optimized, still leads to overfitting within the training domain. PPO, as the vanilla RL baseline, performs the worst among all learning-based methods. The Hybrid APF baseline demonstrates competitive results in structured environments such as the furniture and tree scenes. However, its reliance on hand-crafted potential fields makes it susceptible to local minima, especially in irregular barrier layouts. In such settings, potential field approximations become unreliable, often resulting in unstable trajectories. These results demonstrate that masked contrastive learning and oracle guidance jointly enhance both navigation performance and generalization to unseen environments.

Representative flight trajectories are visualized in Fig. \ref{fig:tra}, further demonstrating the effectiveness of our approach. Our method consistently outperforms all baselines across evaluation scenarios, generating smooth and coherent flight trajectories. This demonstrates the advantage of incorporating oracle guidance, as leveraging complete environmental state information enables the policy to perform more effective and goal-directed maneuvers.

\begin{figure*}[t!]
    \centering

    \begin{subfigure}{0.32\linewidth}
        \centering
        \includegraphics[width=\linewidth]{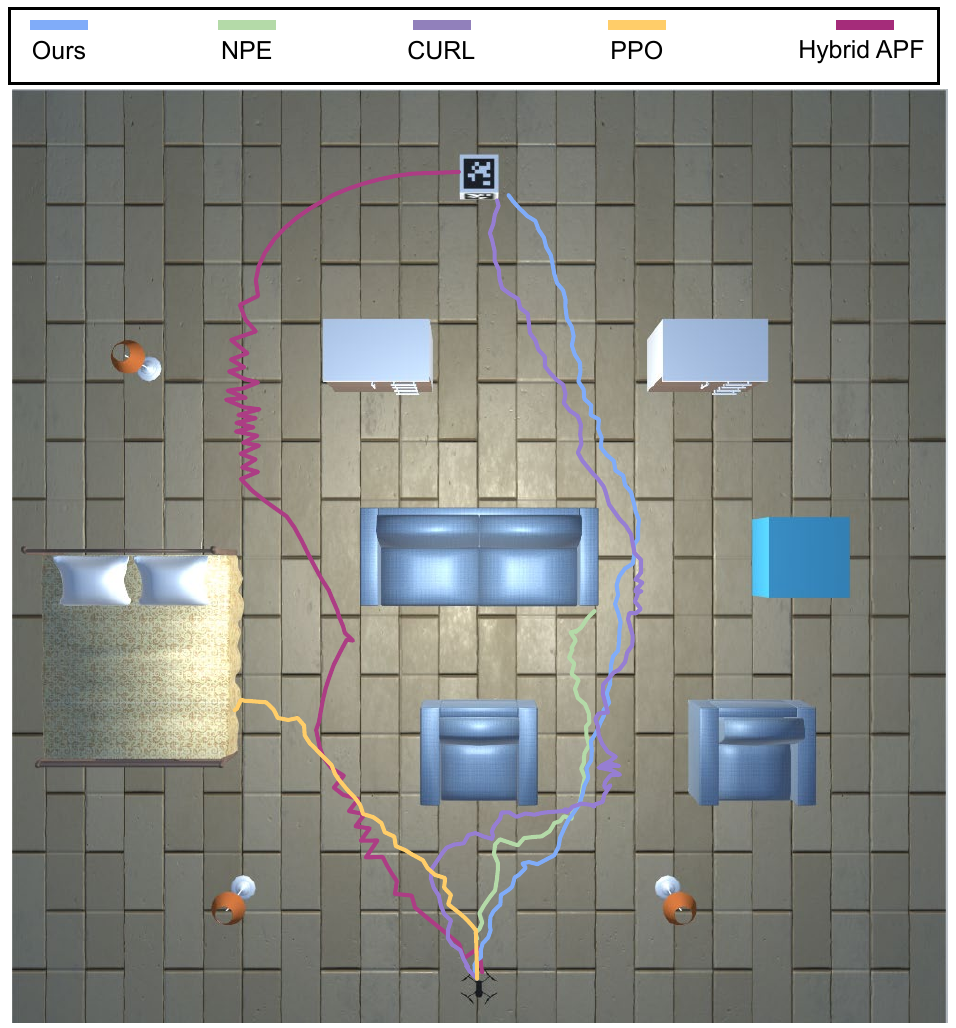}
        \caption{Furniture environment}
        \label{fig:sub1}
    \end{subfigure}
    \hfill
    \begin{subfigure}{0.32\linewidth}
        \centering
        \includegraphics[width=\linewidth]{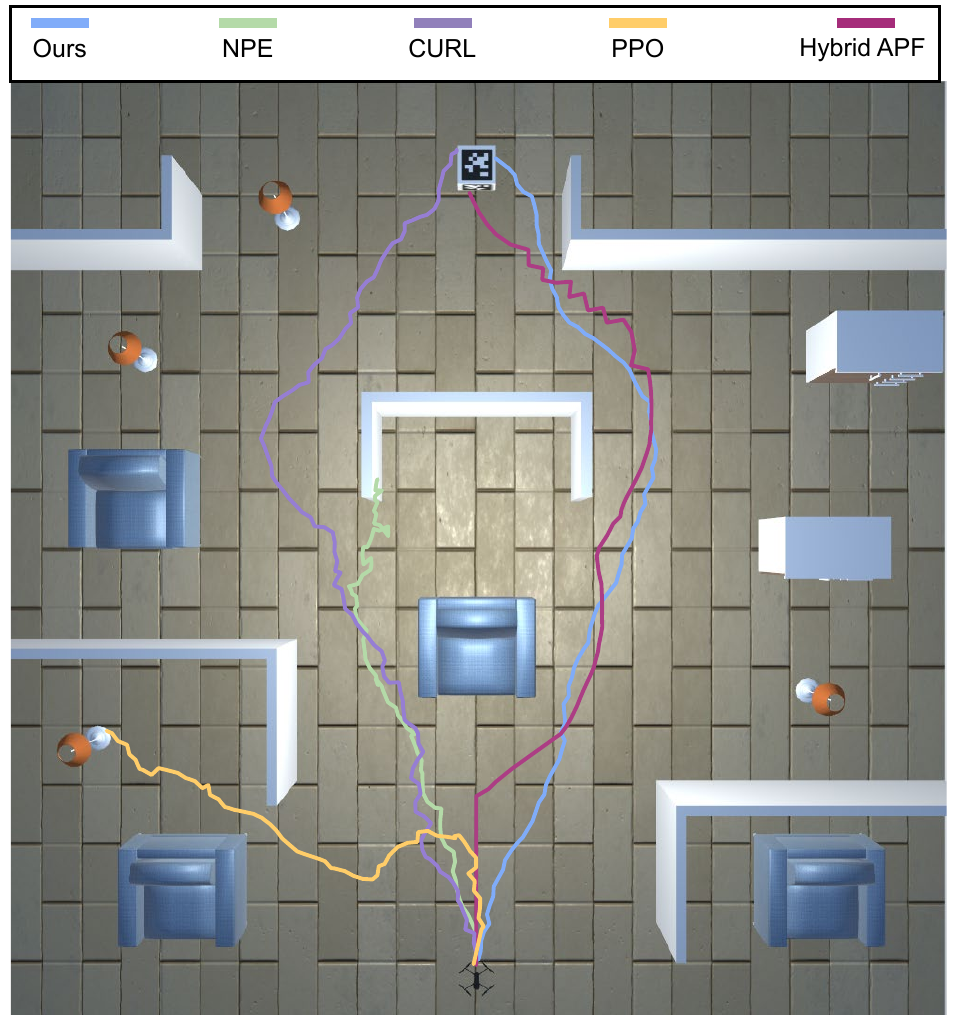}
        \caption{Barrier environment}
        \label{fig:sub2}
    \end{subfigure}
    \hfill
    \begin{subfigure}{0.32\linewidth}
        \centering
        \includegraphics[width=\linewidth]{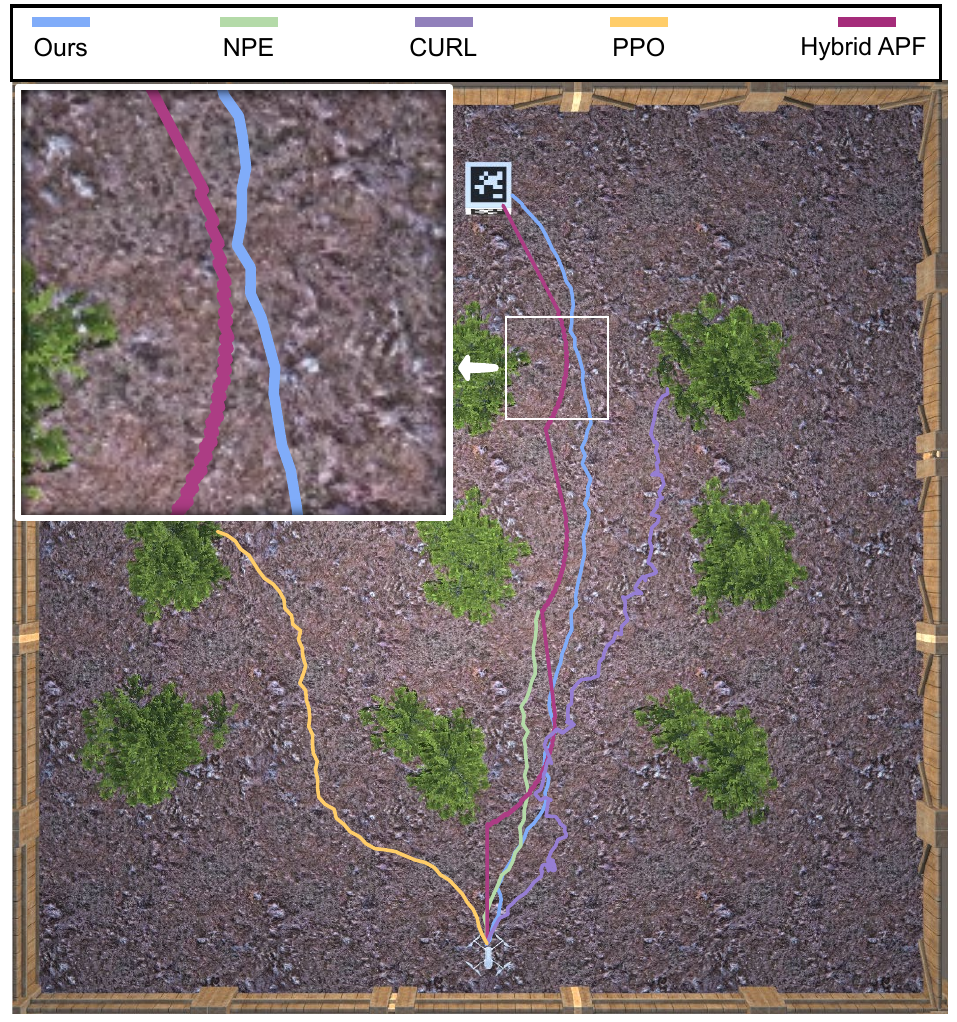}
        \caption{Tree environment}
        \label{fig:sub3}
    \end{subfigure}

    \caption{Qualitative trajectory comparisons of baselines in three evaluation environments. The results demonstrate that the oracle and OMC-RL consistently generate smooth and efficient trajectories. In contrast, other baselines often produce suboptimal or erratic trajectories and tend to fail in environments with irregular layouts and complex textures.}
    \label{fig:tra}
\end{figure*}

\begin{table*}[htbp]
\centering
\setlength{\tabcolsep}{4pt} 
\renewcommand{\arraystretch}{1.1}
\begin{threeparttable}
\caption{Performance of Baselines in Simulation Environments}
\label{tab:baseline}
\begin{tabular*}{\textwidth}{@{\extracolsep{\fill}}lcccccccccccccccccc}
\toprule
\multirow{2}{*}{Method} & 
\multicolumn{6}{c}{Furniture Scenario (Easy)} & 
\multicolumn{6}{c}{Barrier Scenario (Medium)} & 
\multicolumn{6}{c}{Tree Scenario (Hard)} \\
\cmidrule(lr){2-7} \cmidrule(lr){8-13} \cmidrule(lr){14-19}
& NE$\downarrow$ & OS$\uparrow$ & SR$\uparrow$ & SPL$\uparrow$ & CR$\downarrow$ & TTS$\downarrow$
& NE$\downarrow$ & OS$\uparrow$ & SR$\uparrow$ & SPL$\uparrow$ & CR$\downarrow$ & TTS$\downarrow$
& NE$\downarrow$ & OS$\uparrow$ & SR$\uparrow$ & SPL$\uparrow$ & CR$\downarrow$ & TTS$\downarrow$ \\
\midrule
Ours & \textbf{1.49} & \textbf{92.5} & \textbf{91.5} & \textbf{0.88} & \textbf{2.5} & \textbf{502}
              & \textbf{1.62} & \textbf{86.0} & \textbf{84.5} & \textbf{0.80} & \textbf{5.5} & \textbf{525}
              & \textbf{2.01} & \textbf{80.5} & \textbf{78.5} & \textbf{0.73} & \textbf{8.0} & \textbf{516} \\
NPE~\cite{zhang2024npe} & 6.08 & 21.5 & 17.5 & 0.12 & 80.5 & 564
                        & 7.59 & 9.5  & 0.0  & 0.00 & 94.5 & --
                        & 7.83 & 9.0  & 0.0  & 0.00 & 97.0 & -- \\
CURL~\cite{laskin2020curl} & 2.07 & 75.0 & 72.5 & 0.68 & 22.0 & 589
                           & 5.07 & 34.0 & 30.0 & 0.26 & 62.0 & 603
                           & 7.36 & 12.0 & 0.0  & 0.00 & 79.5 & -- \\
PPO~\cite{schulman2017proximal} & 6.90 & 17.5 & 12.0 & 0.08 & 87.0 & 596
                                & 7.36 & 12.0 & 0.0  & 0.00 & 90.0 & -- 
                                & 7.78 & 9.5  & 0.0  & 0.00 & 98.0 & -- \\
Hybrid APF~\cite{pan2021improved} & 2.56 & 73.5 & 72.0 & 0.69 & 7.0 & 741
                                  & 5.23 & 31.0 & 28.5 & 0.25 & 17.5 & 789
                                  & 3.04 & 71.5 & 69.5 & 0.66 & 10.5 & 753 \\
\bottomrule
\end{tabular*}
\vspace{1mm}
\footnotesize *Bold values indicate the best results. “--” indicates zero success.
\end{threeparttable}
\end{table*}

\begin{figure*}[!t]
   \centering
   \includegraphics[width=1.0\textwidth]{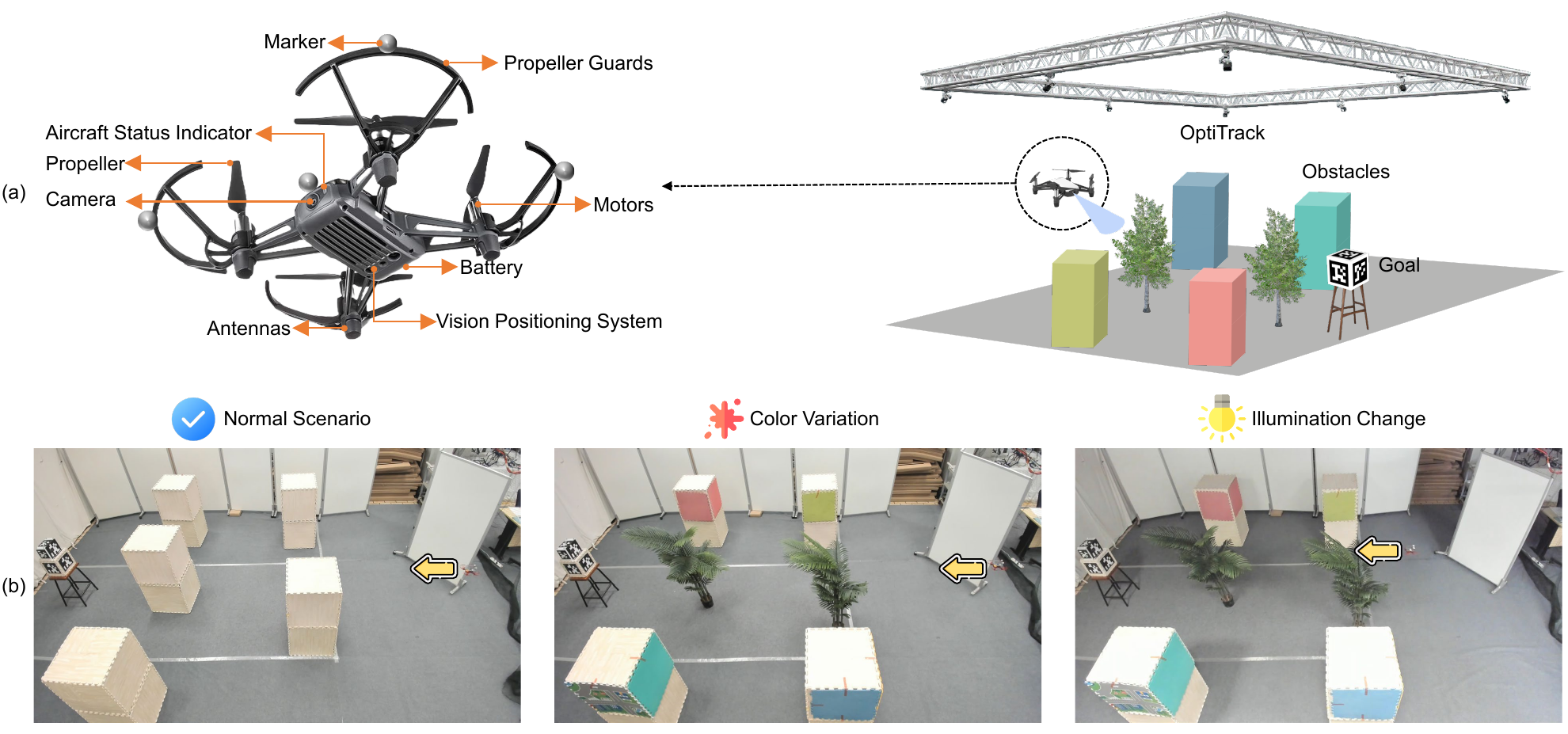} 
   \caption{Real-world deployment setup and evaluation scenarios. (a) Hardware components of the DJI Tello Edu platform and schematic illustration of the indoor test environment. The drone is equipped with a monocular RGB camera and receives localization updates from an OptiTrack motion capture system. (b) Three evaluation scenarios designed to assess robustness under visual domain shifts: normal scenario with uniform obstacle appearance, color variation introducing obstacles with diverse textures and colors, and illumination change induced by manually toggling ambient lighting conditions during flight.}
   \label{fig:setup}
\end{figure*}

\begin{figure*}[htbp]
    \centering
    \subfloat[Visual comparison of flight trajectories in three physical scenarios.]{%
        \includegraphics[width=0.9\linewidth]{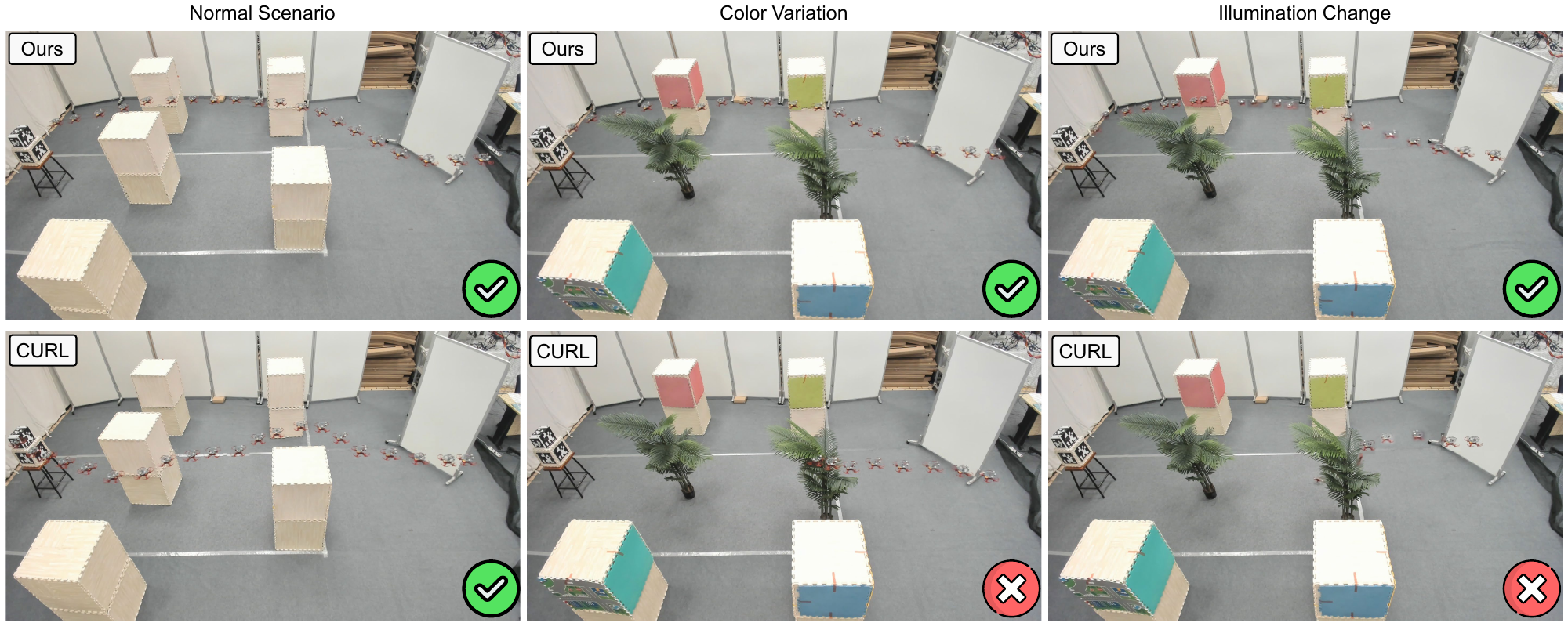}
        \label{fig:top}
    }\vspace{2mm}

    \subfloat[Velocity-annotated trajectories highlighting movement patterns of OMC-RL and CURL.]{%
        \includegraphics[width=0.9\linewidth]{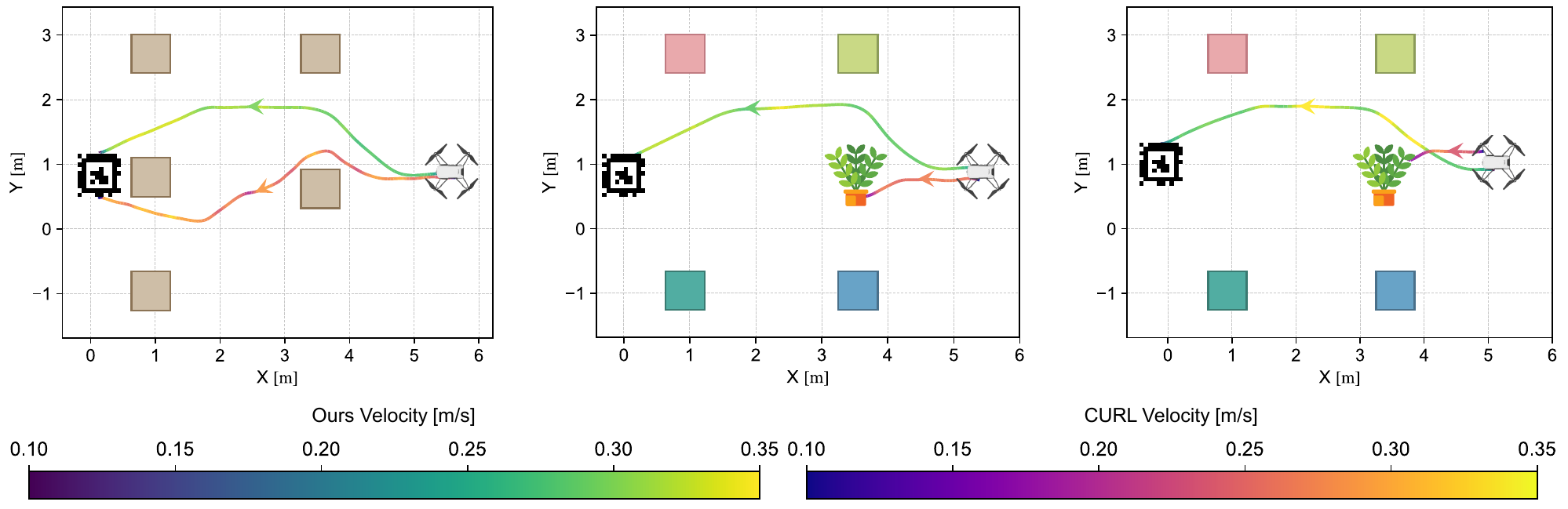}
        \label{fig:bottom}
    }

    \caption{Real-world trajectory evaluation under visual domain shifts. Our method consistently produces smooth and successful trajectories across all scenarios, demonstrating strong robustness and generalization. In contrast, CURL frequently fails when facing visual disturbances such as obstacle color variation, texture inconsistency, and sudden illumination shifts.}
    \label{fig:tra_physical}
\end{figure*}

\begin{figure}[htbp]
    \centering
    \includegraphics[width=1.0\linewidth]{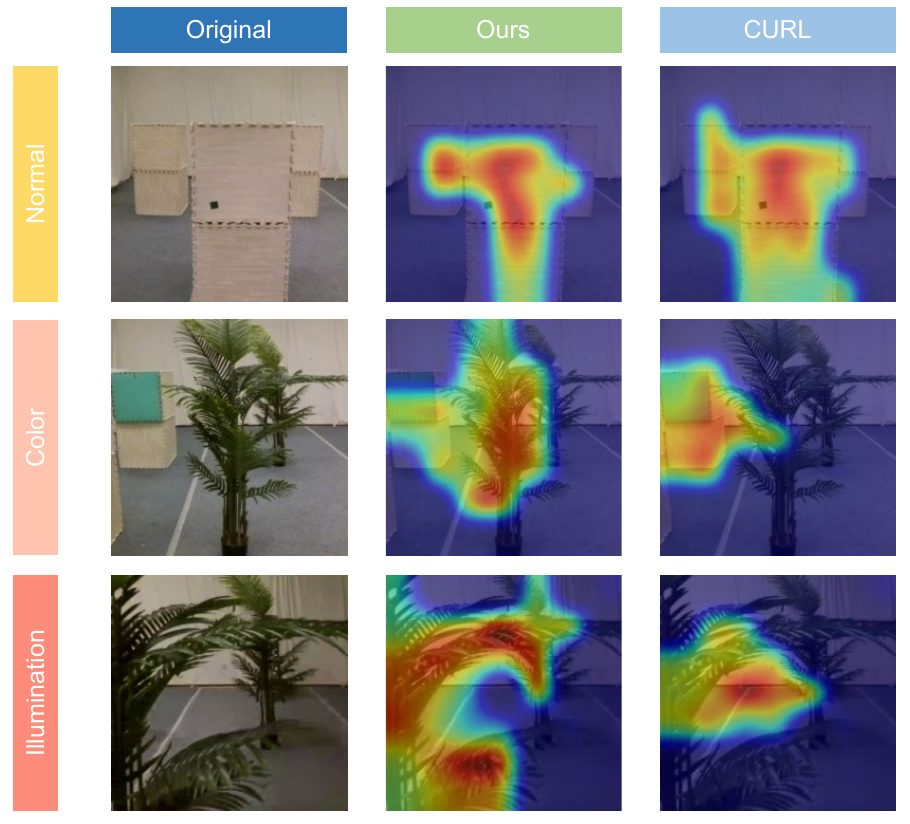}
    \caption{Attention visualization using Grad-CAM \cite{selvaraju2017grad} in physical environments. Our method consistently focuses on task-relevant regions across normal, color-shifted, and illumination-varied scenes, whereas CURL shows misplaced attention, especially under visual disturbances.}
    \label{fig:attention}
\end{figure}

\begin{figure}[htbp]
    \centering
    \includegraphics[width=1.0\linewidth]{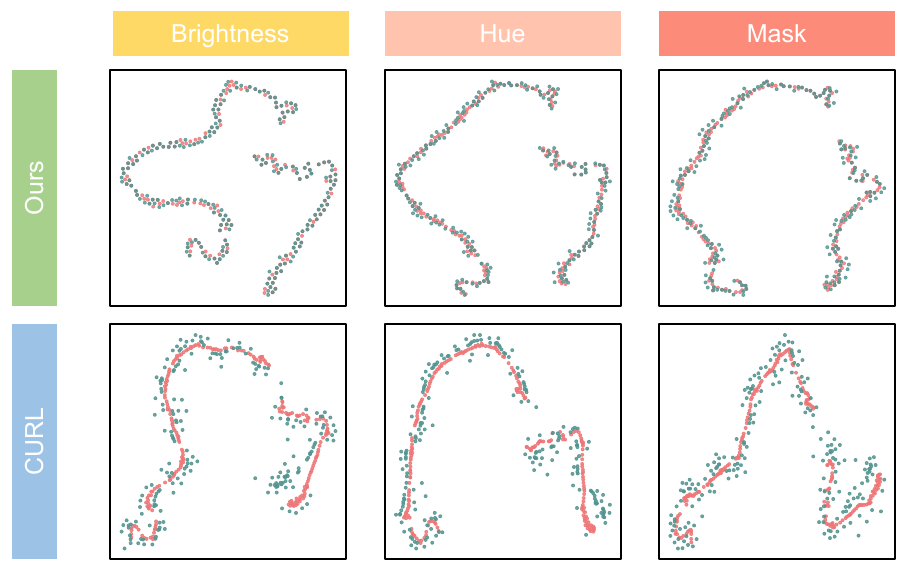}
    \caption{t-SNE \cite{van2008visualizing} visualization of learned features under input disturbances. Our method maintains compact and consistent feature clusters under brightness, hue, and masking conditions, while CURL exhibits scattered and misaligned representations.}
    \label{fig:tsne}
\end{figure}

\subsection{Real-World Experiments}

To further evaluate the effectiveness and generalizability of OMC-RL, we conduct real-world experiments in two indoor environments. The real-world deployment setup is illustrated in Fig. \ref{fig:setup}. The first is a simple indoor scene containing multiple box-shaped obstacles with an identical appearance. The second is a more complex setting that introduces additional perceptual challenges, including potted plants and obstacles with diverse textures and colors, which increase scene diversity and pose greater sim-to-real domain shifts for visual perception. In addition, to simulate abrupt illumination changes often encountered in real deployments, we manually switch ambient lighting on and off during flight to test robustness against visual perturbations. The drone used for testing is a DJI Tello Edu, a lightweight quadrotor equipped with a forward-facing monocular RGB camera that streams $720p$ video at $30$ FPS and provides a field of view (FOV) of $82.6^\circ$. It receives localization updates at $120$ Hz via the OptiTrack motion capture system. Start and goal positions are manually specified before each flight. Obstacles are perceived using the onboard RGB stream, which is resized and processed through the pretrained feature encoder to extract compact representations. These representations, along with other state information, are then fed into the downstream policy network to generate continuous control commands. The entire model is exported in ONNX format and deployed for real-time inference without extra fine-tuning.

We evaluate the model under 20 trials per environment, across varying start-goal configurations. The method achieves a success rate of $70\%$ in the normal scene, $65\%$ under color variations, and $55\%$ under illumination disturbance. In comparison, CURL only achieves $45\%$ in the normal scene and fails entirely under color and illumination disturbances. Flight trajectories for representative runs are visualized in Fig. \ref{fig:tra_physical}. Despite the presence of visual distractors such as non-uniform obstacle textures, cluttered plant structures, and sudden lighting changes, OMC-RL demonstrates robust and generalizable performance. This validates the strength of our masked contrastive learning framework, which enables the encoder to focus on temporally coherent and task-relevant representations rather than overfitting to domain-specific visual patterns, thereby enhancing real-world transferability.

To further validate the effectiveness of OMC-RL in real-world scenarios, we conduct qualitative analyses on visual attention and feature consistency under various disturbances. As illustrated in Fig. \ref{fig:attention}, we visualize the attention heatmaps generated by OMC-RL and CURL using Grad-CAM \cite{selvaraju2017grad} across three physical scenarios. Compared to CURL, OMC-RL consistently focuses on salient objects of interest, even in the presence of color variations and illumination changes, indicating improved robustness to perceptual noise and stronger task awareness. We further assess the representation stability of OMC-RL by analyzing the encoder’s response to disturbed inputs. Specifically, we collect FOV images from successful trajectories and apply random changes in brightness and hue, as well as random masking over inputs. These modified observations are passed through the feature encoders of OMC-RL and CURL, and the resulting features are projected using t-SNE \cite{van2008visualizing}, as shown in Fig. \ref{fig:tsne}. The features produced by OMC-RL remain densely clustered and well-aligned with those from the original inputs, indicating strong invariance and continuity in its learned representations. In contrast, features from CURL exhibit noticeable shift and dispersion under the same conditions, reflecting higher sensitivity to domain shifts and reduced generalization.

\subsection{Ablation Studies}

To better interpret the contributions of each component in OMC-RL, we conduct a series of ablation studies and analyses. Note that all ablation results are reported across evaluation environments, as our primary focus lies in assessing the model's generalizability beyond the training domain.

\begin{figure}[htbp]
    \centering
    \includegraphics[width=1.0\linewidth]{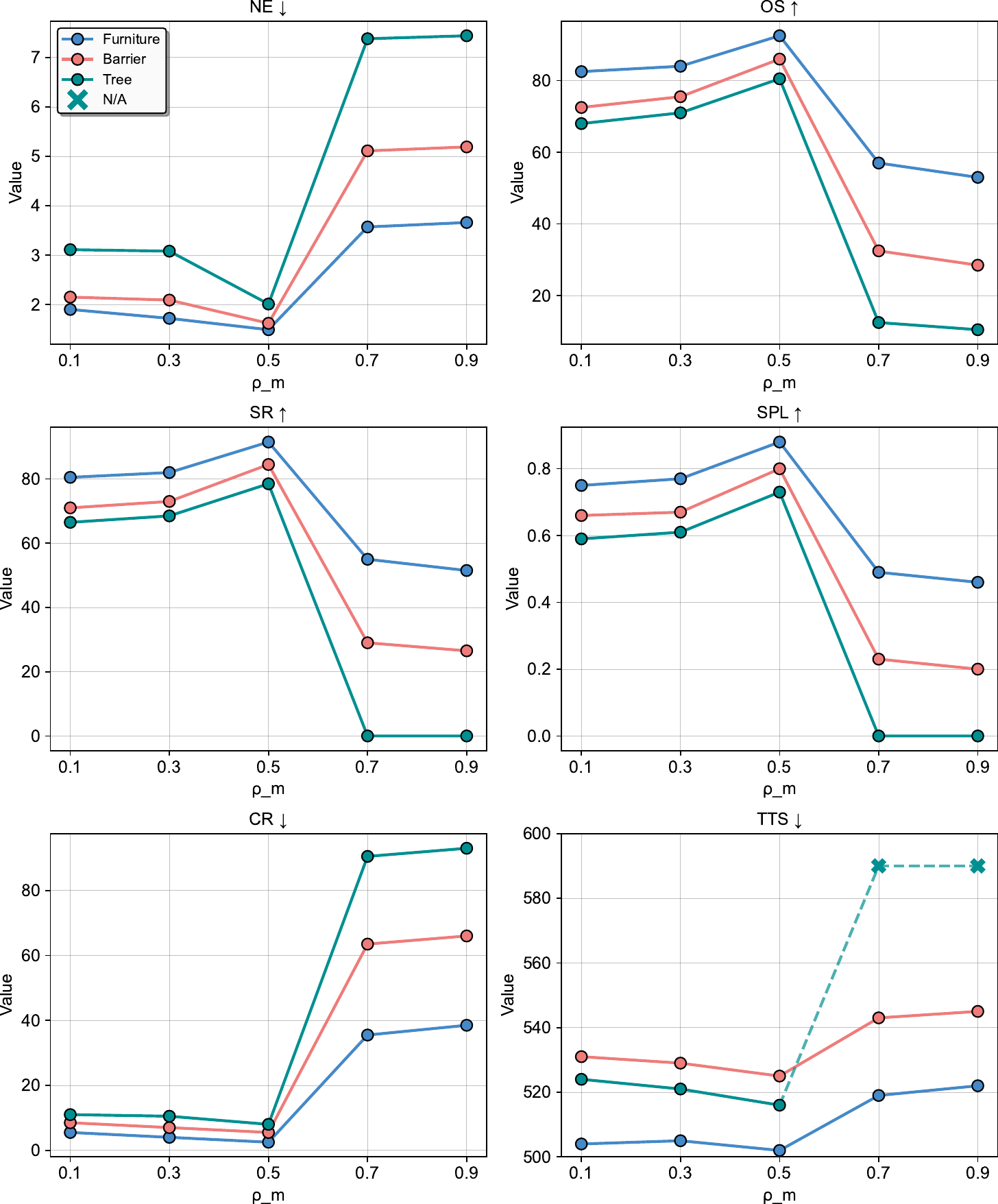}
    \caption{Ablation study on masking probability \(\varrho_m\). We evaluate downstream navigation performance with encoders pretrained under different masking probabilities. Navigation metrics are visualized as line plots across three simulation environments. A masking ratio (\(\varrho_m = 0.5\)) achieves the best overall performance. In contrast, overly low or high values lead to performance degradation due to insufficient use of the Transformer’s reconstruction capability or excessive information loss.}
    \label{fig:mask_probability}
\end{figure}

\begin{table}[htbp]
\centering
\renewcommand{\arraystretch}{1.1}
\setlength{\tabcolsep}{3.5pt} 
\begin{threeparttable}
\caption{Ablation on Sequential Input for Contrastive Representation Learning}
\label{tab:sequential}
\begin{tabular}{cccccccc}
\toprule
Environment & Method & NE$\downarrow$ & OS$\uparrow$ & SR$\uparrow$ & SPL$\uparrow$ & CR$\downarrow$ & TTS$\downarrow$ \\
\midrule
\multirow{2}{*}{Furniture (Easy)} 
    & CURL      & \textbf{2.07} & \textbf{75.0} & \textbf{72.5} & \textbf{0.68} & \textbf{22.0} & \textbf{589} \\
    & CURL-cons & 5.04          & 33.0          & 30.5          & 0.24          & 62.0          & 610 \\
\midrule
\multirow{2}{*}{Barrier (Medium)}   
    & CURL      & \textbf{5.07} & \textbf{34.0} & \textbf{30.0} & \textbf{0.26} & \textbf{62.0} & \textbf{603} \\
    & CURL-cons & 7.09          & 14.5          & 0.0           & 0.00          & 84.0          & -- \\
\midrule
\multirow{2}{*}{Tree (Hard)}      
    & CURL      & \textbf{7.36} & \textbf{12.0} & 0.0           & 0.00          & \textbf{79.5}          & -- \\
    & CURL-cons & 7.97          & 7.5           & 0.0           & 0.00          & 89.5          & -- \\
\bottomrule
\end{tabular}
\end{threeparttable}
\end{table}

\textbf{(1) Masking Probability.} We investigate the impact of varying the masking probability \(\varrho_m\) on downstream navigation performance. Specifically, we pretrain separate encoders under different \(\varrho_m \in \{0.1,\ 0.3, \ 0.5,\ 0.7,\ 0.9\}\), and pair each encoder with a newly initialized policy network trained in the box-based environment. The resulting policies are then tested in evaluation environments. The corresponding metrics are summarized in Fig. \ref{fig:mask_probability}. As shown, setting \(\varrho_m = 0.5\) achieves the best overall performance. A high masking ratio (e.g., \(\varrho_m = 0.9\) or \(\varrho_m = 0.7\)) leads to a significant performance drop, even underperforming CURL, as excessive masking removes most spatial and semantic cues from the input, impairing the model's ability to infer latent structure and recover useful features from corrupted inputs. Similarly, a low masking ratio (e.g., \(\varrho_m = 0.1\) or \(\varrho_m = 0.3\)) also hampers generalization. According to Eq. \eqref{cl loss}, insufficient masked positions result in sparse training signals, making it difficult for the model to learn useful temporal correlations and reconstruct representations. These results confirm the utility of our masked contrastive learning framework. Properly tuned masking probability encourages the encoder to extract consistent and temporally predictive features, while both under-masking and over-masking degrade downstream navigation performance due to insufficient supervisory signals or excessive information loss.

\begin{figure}[htbp]
    \centering
    \includegraphics[width=1.0\linewidth]{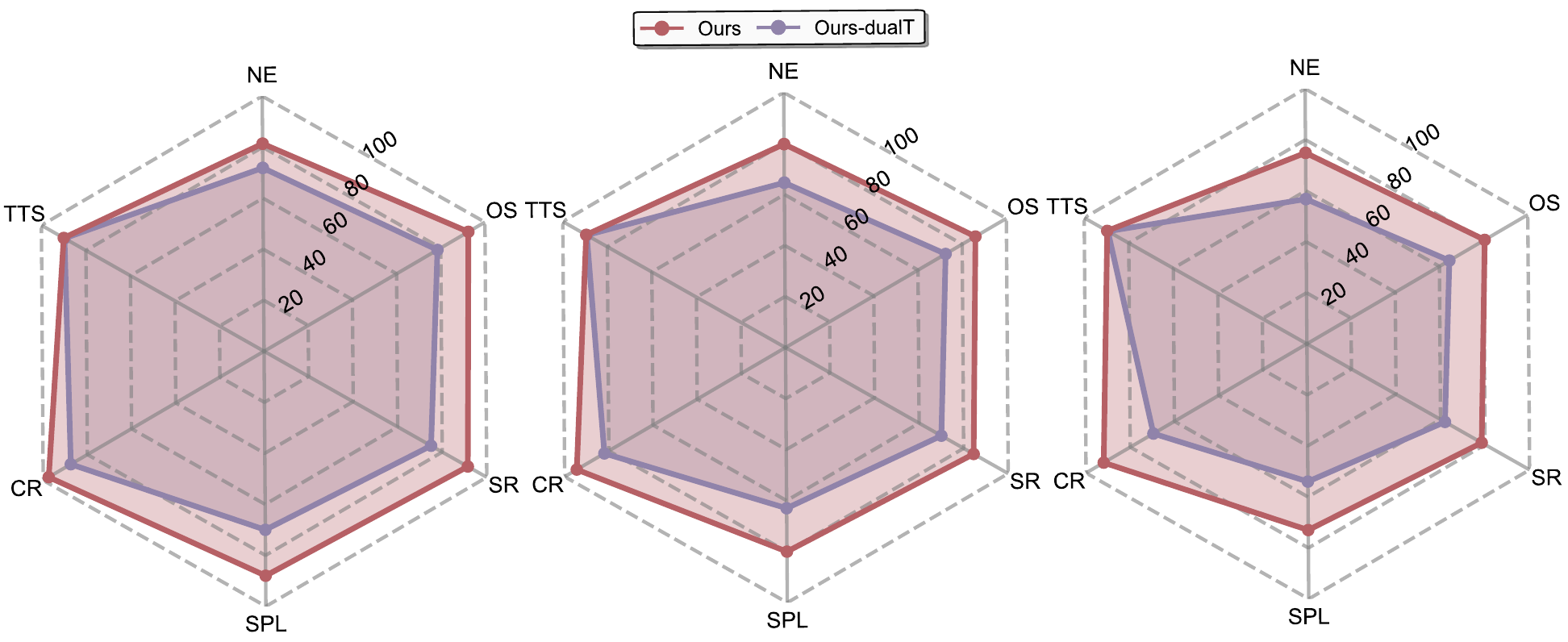}
    \caption{Ablation study on the Transformer module design. We compare our setup (Transformer on query encoder only) with a dual-Transformer variant where both query and key encoders use Transformers. Radar plots visualize normalized navigation metrics, where higher values indicate better performance. The dual-Transformer design degrades performance due to over-reliance on temporal modeling, weakening the CNN encoder’s standalone capability during downstream policy learning.}
    \label{fig:transformer}
\end{figure}

\begin{figure}[t!]
    \centering
    \includegraphics[width=0.95\linewidth]{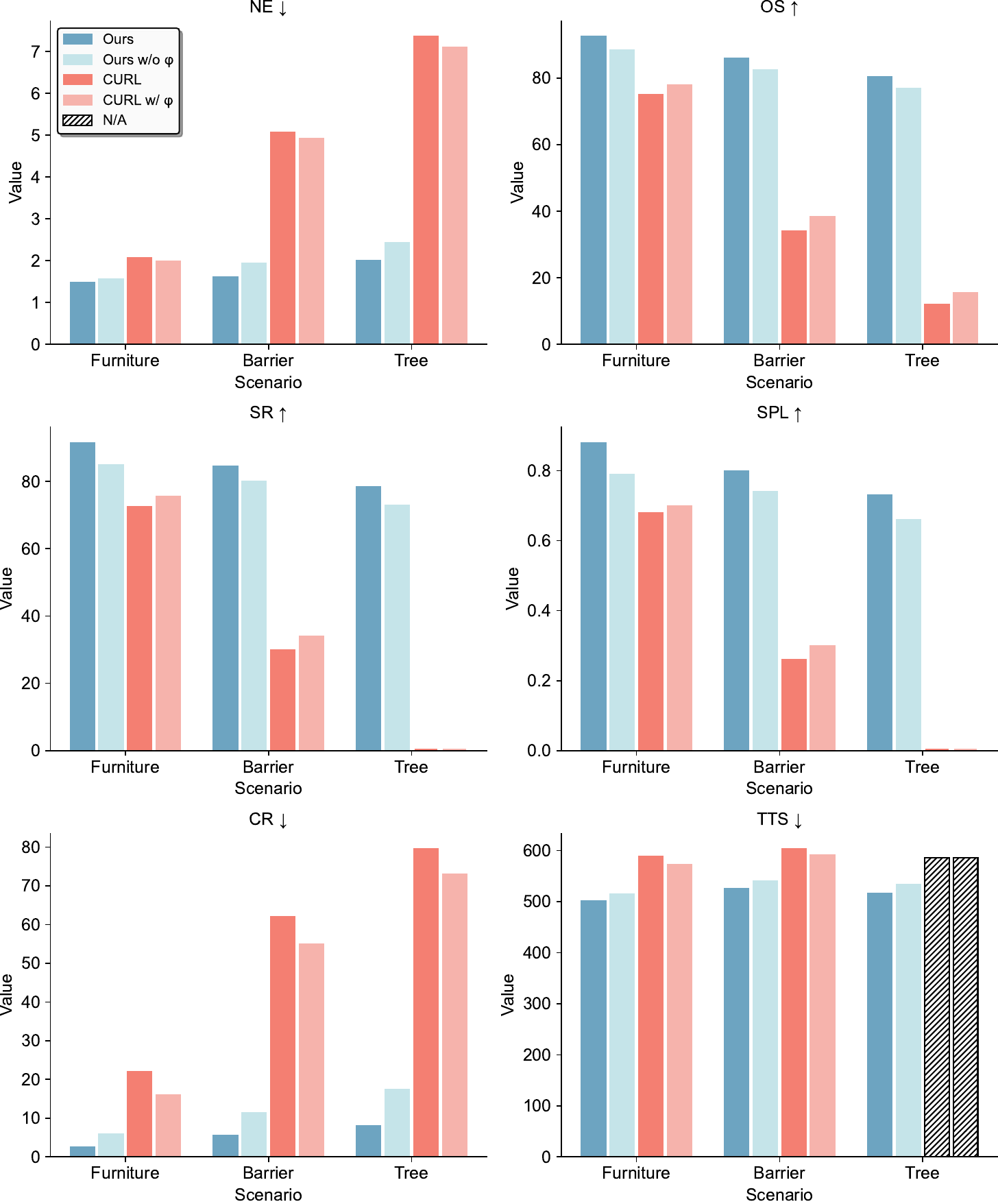}
    \caption{Ablation study on the non-linear projection module \(\varphi\). We visualize downstream navigation performance using bar plots. Results demonstrate that removing the projection head leads to consistent degradation compared to the full model. While adding \(\varphi\) to CURL improves its performance, both CURL variants remain inferior to our approach. These results highlight the importance of \(\varphi\) in learning representations.}
    \label{fig:projection}
\end{figure}

\begin{figure*}[t!]
    \centering
    \includegraphics[width=1.0\linewidth]{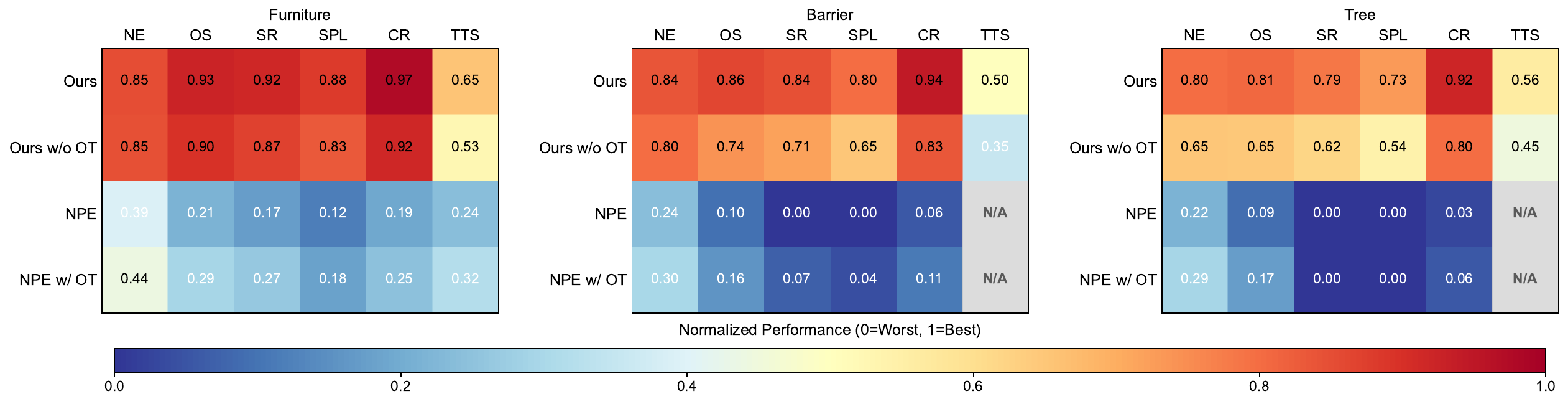}
    \caption{Ablation Study on Oracle Teacher (OT). The heatmap visualizes the normalized performance of all metrics, where higher values indicate better performance. Removing the oracle teacher leads to moderate drops in the furniture scene, but causes significant degradation in more complex settings. This highlights the oracle's contribution in guiding policy training. Adding OT to the NPE baseline yields improvements over NPE, but still falls short of our method. These results highlight the benefit of oracle guidance for learning effective policies.}
    \label{fig:heatmap}
\end{figure*}

\textbf{(2) Sequential Input} In our method, $T$ temporally consecutive frames are sampled from $\mathcal{H}$ and processed by the Transformer, where contrastive pairs are constructed through masking. In contrast, CURL forms contrastive pairs by augmenting individual frames independently sampled from the dataset. To investigate the effect of sequential input, we modify CURL to operate on consecutive frames rather than random ones, denoted as CURL-cons. Empirically, we observe that training CURL-cons with high-frequency sequential frames leads to severe instability and even encoder collapse. Lowering the sampling frequency improves training stability but still results in significantly degraded policy performance, as shown in Table \ref{tab:sequential}. These findings suggest that simply applying sequential input to CURL introduces visually redundant and highly correlated features that are difficult to differentiate using basic data augmentations, thereby limiting the effectiveness of contrastive learning and downstream generalization. The results further confirm that the performance gains of our method are not solely due to temporal continuity, but also depend on the use of masking during sequence modeling.

\textbf{(3) Transformer Module.} Our method adopts an asymmetric design in which only the query encoder is equipped with a Transformer module, while the key branch remains purely convolutional. Although this avoids additional computational overhead, we further investigate whether this asymmetry is indeed beneficial through a targeted ablation. We introduce an additional Transformer module $\xi_k$ for the key encoder, denoted as our-dualT. The parameters are updated using momentum contrastive learning following the paradigm in Section \ref{text:masked}. As shown in Fig. \ref{fig:transformer}, this design leads to a drop in performance. We attribute this to a mismatch between the contrastive learning objective and downstream usage: during policy learning, only the CNN encoder output is used to generate actions without the Transformer module. Introducing temporal modeling on both branches of masked contrastive learning encourages the model to rely heavily on the Transformer layers, thereby weakening the CNN encoder’s ability to encode temporal dependencies independently. This limits the utility of the learned representations once the Transformer is removed, and ultimately impairs policy generalization.

\begin{table*}[t!]
\centering
\setlength{\tabcolsep}{4pt}
\renewcommand{\arraystretch}{1.1}
\begin{threeparttable}
\caption{Ablation on Decay Strategies for Downstream Policy Learning}
\label{tab:decay}
\begin{tabular*}{\textwidth}{@{\extracolsep{\fill}}lcccccccccccccccccc}
\toprule
\multirow{2}{*}{Method} & 
\multicolumn{6}{c}{Furniture Scenario (Easy)} & 
\multicolumn{6}{c}{Barrier Scenario (Medium)} & 
\multicolumn{6}{c}{Tree Scenario (Hard)} \\
\cmidrule(lr){2-7} \cmidrule(lr){8-13} \cmidrule(lr){14-19}
& NE$\downarrow$ & OS$\uparrow$ & SR$\uparrow$ & SPL$\uparrow$ & CR$\downarrow$ & TTS$\downarrow$
& NE$\downarrow$ & OS$\uparrow$ & SR$\uparrow$ & SPL$\uparrow$ & CR$\downarrow$ & TTS$\downarrow$
& NE$\downarrow$ & OS$\uparrow$ & SR$\uparrow$ & SPL$\uparrow$ & CR$\downarrow$ & TTS$\downarrow$ \\
\midrule
Ours & \textbf{1.49} & \textbf{92.5} & \textbf{91.5} & \textbf{0.88} & \textbf{2.5} & \textbf{502}
              & \textbf{1.62} & \textbf{86.0} & \textbf{84.5} & \textbf{0.80} & \textbf{5.5} & \textbf{525}
              & \textbf{2.01} & \textbf{80.5} & \textbf{78.5} & \textbf{0.73} & \textbf{8.0} & \textbf{516} \\
Ours-fixed        & 1.77 & 84.5 & 81.5 & 0.75 & 6.5 & 517
                  & 1.84 & 79.0 & 76.0 & 0.69 & 9.5 & 538
                  & 2.69 & 72.5 & 71.0 & 0.63 & 14.0 & 528 \\
Ours-exp          & 1.65 & 87.0 & 84.5 & 0.77 & 5.5 & 512
                  & 1.90 & 81.0 & 78.5 & 0.71 & 8.0 & 533
                  & 2.56 & 75.5 & 72.5 & 0.65 & 13.0 & 524 \\
\bottomrule
\end{tabular*}
\end{threeparttable}
\end{table*}


\textbf{(4) Non-Projection Module.} Our method includes a non-linear projection module $\varphi$ prior to the contrastive loss. To assess its contribution to visuomotor policy learning, we conduct an ablation in which $\varphi$ is removed while keeping all other components unchanged, denoted as ours w/o~$\varphi$. The evaluation results, shown in Fig. \ref{fig:projection}, indicate that removing $\varphi$ leads to degraded policy performance. This probably stems from the reduced representational capacity of the feature encoder, as direct optimization without non-linear transformation can restrict the encoder from producing discriminative embeddings better for policy learning. While ours w/o~$\varphi$ underperforms the full model, it still exceeds our CURL baseline. Note that the vanilla CURL does not include the non-linear projection module. Here, a CURL variant incorporating $\varphi$ is used and denoted as CURL w/ $\varphi$, as explained in Section~\ref{text::baselines}. To further examine the effect of $\varphi$, we compare both CURL variants and observe that adding the projection head consistently improves performance. All aforementioned findings are aligned with the conclusion reported in \cite{chen2020simple}. However, both CURL variants remain inferior to our proposed method. These results support the utility of the non-linear projection module $\varphi$ and further highlight the role of masking and temporal correlation in learning transferable visuomotor representations.

\textbf{(5) Oracle Teacher.} In our method, an oracle teacher policy with access to full global state information is introduced to guide the early stage of downstream policy training. To evaluate the effectiveness of this guidance, we implement an ablation variant in which the oracle guidance is removed, denoted as our w/o OT. Furthermore, we replace the non-expert policy in the NPE baseline with the oracle teacher, denoted as NPE w/ OT. As summarized in Fig. \ref{fig:heatmap}, our w/o OT results in only a slight performance drop in the easy furniture environment. However, its performance degrades significantly in the more challenging barrier and tree settings. This demonstrates the importance of the oracle teacher, which leverages privileged information to generate ideal action commands. In mediated environments, the agent can still compensate through environmental exploration. However, as the scene becomes more complex and the visual appearance becomes more diverse and perceptually demanding, the absence of oracle guidance leads to more noticeable performance degradation. Results also show that NPE w/ OT outperforms the original NPE but remains inferior to both our w/o OT and our full model. These findings further confirm the benefit of oracle guidance and emphasize the significance of learning generalized representations for visuomotor policy learning.

\textbf{(6) Decay Schedule.} In our method, a decaying coefficient $\alpha$ is introduced in Eq. \eqref{total loss} to balance the RL objective and the KL divergence term. This coefficient is linearly annealed throughout training, allowing the policy to gradually transition from imitating the oracle to exploring the environment independently. To assess the impact of the decay schedule, we compare three variants: a fixed coefficient setting (denoted as our-fixed), an exponential decay setting (our-exp), and our proposed linear decay schedule. For our-exp, $\alpha$ is decayed by a factor of $0.95$ every $1000$ steps. As shown in Table \ref{tab:decay}, the linear decay consistently outperforms the other two variants. Our-fixed prevents the agent from sufficiently reducing its reliance on the oracle, thereby limiting late-stage exploration. Our-exp reduces $\alpha$ too rapidly in the early stage, resulting in insufficient imitation and limiting the agent’s ability to acquire meaningful guidance. In the later stage, the decay slows down excessively, hindering the agent's shift toward independent exploration. The linear schedule, by contrast, enables a smooth and progressive shift from imitation to exploration, facilitating better policy adaptation across environments.

\section{Conclusion}
In this work, we propose OMC-RL, a novel framework that combines masked contrastive learning with oracle-guided RL to improve the sample efficiency and asymptotic performance of visuomotor policies for drones. OMC-RL decouples upstream representation learning from downstream policy learning. For upstream, a Transformer-integrated module combining sequence modeling and masked contrastive learning is employed to extract temporally-aware and task-relevant representations from sequential visual inputs. After training, the Transformer component is discarded, while the visual encoder is frozen and used for downstream policy learning. During this stage, an oracle teacher policy, which has access to full global state information, supervises the agent in the early stage and gradually reduces its impact as the agent transitions to independent exploration. Extensive experiments in both simulated and real-world environments demonstrate that OMC-RL effectively improves sample efficiency and asymptotic performance. Moreover, it enhances policy generalization under various and challenging perceptual conditions. In future work, we plan to extend OMC-RL to other robotic vision tasks. Owing to its decoupled training of the feature encoder, the framework can capture task-agnostic visual features that support the learning of transferable policies across diverse settings. Another promising direction is to extend OMC-RL to multi-modal or instruction-guided policy learning scenarios, where grounding visual observations to semantic goals remains a major challenge.

\bibliographystyle{IEEEtran}
\bibliography{IEEEabrv,yuhang}

\vspace{-1cm}
\begin{IEEEbiography}[{\includegraphics[width=1in,height=1.25in,clip,keepaspectratio]{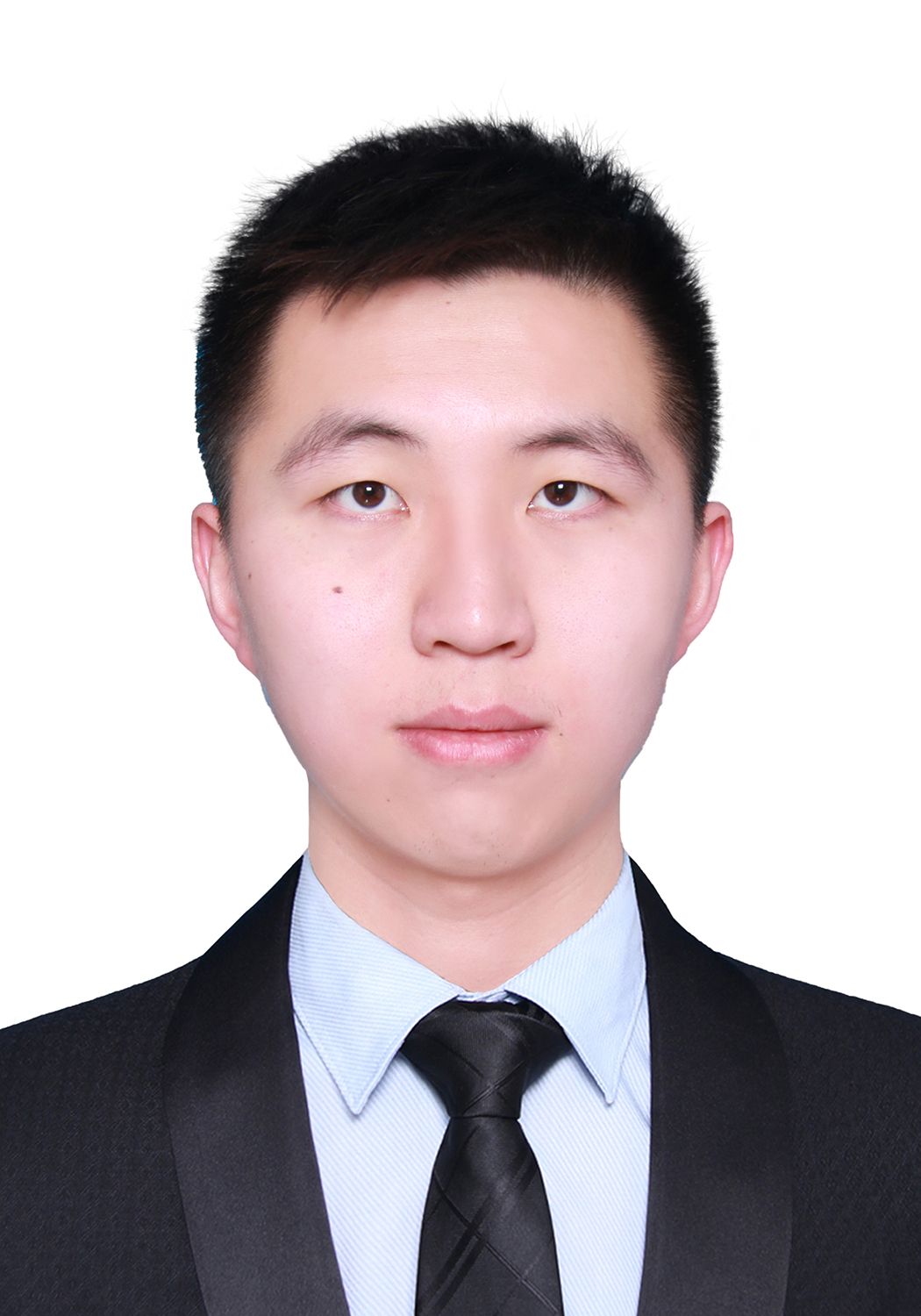}}]{Yuhang Zhang} (graduate student member, IEEE) received the B.E. degree in flight vehicle propulsion engineering from Harbin Engineering University, Harbin, China, in 2021 and the M.Eng. degree from the School of Mechanical \& Aerospace Engineering at Nanyang Technological University (NTU), Singapore, in 2023. Currently,  he is pursuing his Ph.D. degree at the School of Mechanical \& Aerospace Engineering at NTU, Singapore.

His research primarily focuses on unmanned aerial vehicles, deep reinforcement learning, and vision-and-language navigation.
\end{IEEEbiography}

\begin{IEEEbiography}[{\includegraphics[width=1in,height=1.25in,clip,keepaspectratio]{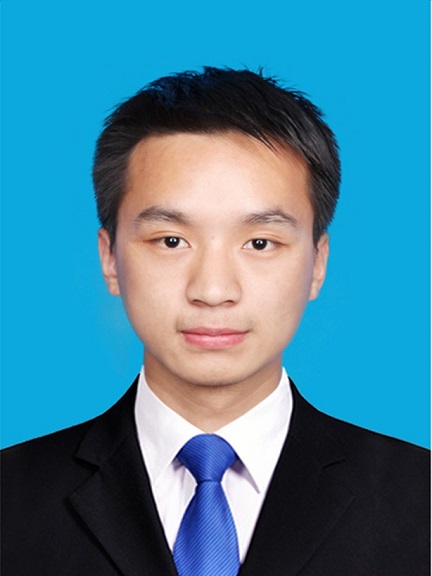}}]{Jiaping Xiao} (graduate student member, IEEE)
received the B.E. degree in aircraft design and engineering and the M.S. degree in flight dynamics and control from Beihang University, Beijing, China, in 2014 and 2017, and the Ph.D. degree in intelligent systems from Nanyang Technological University (NTU), Singapore, in 2024. He is currently a research fellow with the School of Mechanical and Aerospace Engineering, NTU.

His research interests include cyber-physical systems, reinforcement learning, machine vision, and aerial robotics.
\end{IEEEbiography}

\begin{IEEEbiography}[{\includegraphics[width=1in,height=1.25in,clip,keepaspectratio]{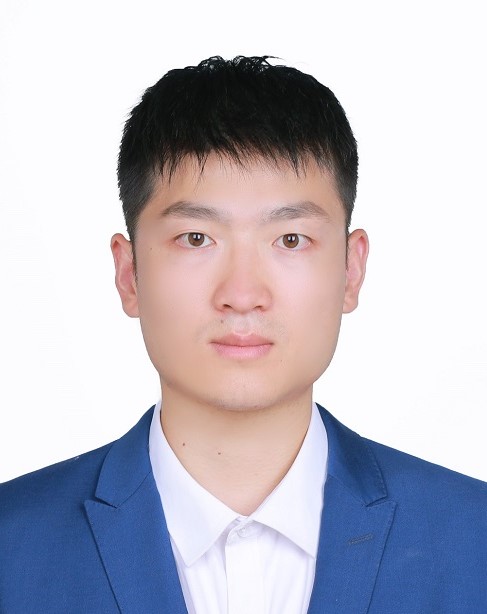}}]{Chao Yan} (member, IEEE) received the B.E. degree in electrical engineering and automation from China University of Mining and Technology, Xuzhou, China, in 2017, and the M.S. and Ph.D. degrees in control science and engineering from the National University of Defense Technology, Changsha, China, in 2019, and 2023, respectively. He was a visiting Ph.D. student with the School of Mechanical and Aerospace Engineering, Nanyang Technological University, Singapore, from 2021 to 2022.

He is currently an Associate Professor with the College of Automation Engineering, Nanjing University of Aeronautics and Astronautics, Nanjing, China. His research interests include deep reinforcement learning and coordination control of UAV swarms.
\end{IEEEbiography}

\begin{IEEEbiography}[{\includegraphics[width=1in,height=1.25in,clip,keepaspectratio]{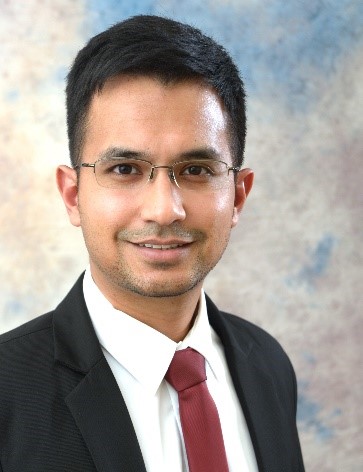}}]{Mir Feroskhan}
(member, IEEE) received B.E. degree (Hons.) in aerospace engineering from Nanyang Technological University, Singapore, in 2011, and the Ph.D. degree in aerospace engineering from the Florida Institute of Technology, Melbourne, FL, in 2016. He is currently an assistant professor with the School of Mechanical \& Aerospace Engineering at NTU.

His research interests include nonlinear control systems, multi-agent systems, flight dynamics and control, and aerial robotics.
\end{IEEEbiography}

\end{document}